\newtheorem{definition}{Definition}
\newtheorem{assumption}{Assumption}
\newtheorem{theorem}{Theorem}
\newtheorem{lemma}{Lemma}
\newtheorem{proof}{Proof}
\begin{document}

\title{Silent Neuron Theory and Plasticity Preservation for Deep Reinforcement Learning in Adaptive Video Streaming}

\author{Zhiqiang He, \IEEEmembership{Member,~IEEE}, Zhi Liu, \IEEEmembership{Senior Member,~IEEE}

\thanks{Zhiqiang He and Zhi Liu are with the Department of Computer and Network Engineering, the University of Electro-Communications, Japan. Email: hezhiqiang@ieee.org, liu@ieee.org.}

\thanks{The corresponding authors is Zhi Liu.}
}

\markboth{Journal of \LaTeX\ Class Files,~Vol.~14, No.~8, August~2021}%
{Shell \MakeLowercase{\textit{et al.}}: A Sample Article Using IEEEtran.cls for IEEE Journals}


\maketitle

\begin{abstract}
Adaptive video streaming optimizes Quality of Experience (QoE) metrics by selecting appropriate bitrates according to varying network bandwidth and user demands. In practice, however, real-world network bandwidth often exhibits heterogeneity relative to training environments. Current methods predominantly tackle this problem through learning-based approaches designed to improve generalization performance. While our systematic investigation reveals a critical limitation: neural networks suffer from plasticity loss, significantly impeding their ability to adapt to heterogeneous network conditions. Through theoretical analysis of neural propagation mechanisms, we demonstrate that existing dormant neuron metrics inadequately characterize neural plasticity loss. To address this limitation, we have developed the Silent Neuron theory, which provides a more comprehensive framework for understanding plasticity degradation. Based on these theoretical insights, we propose the Reset Silent Neuron (ReSiN), which preserves neural plasticity through strategic neuron resets guided by both forward and backward propagation states. Moreover, we establish a tighter performance bound for ReSiN under non-stationary network conditions. In our implementation of an adaptive video streaming system, ReSiN has shown significant improvements over existing solutions, achieving up to 168\% higher bitrate and 108\% better quality of experience (QoE) while maintaining comparable smoothness. Furthermore, ReSiN consistently outperforms in stationary environments, demonstrating its robust adaptability across different network conditions.
\end{abstract}

\begin{IEEEkeywords}
Adaptive Video Streaming, Reinforcement Learning, neuron network plasticity, non-stationary bandwidth.
\end{IEEEkeywords}

\section{Introduction}
\IEEEPARstart{A}daptive Bitrate (ABR) streaming optimizes users’ Quality of Experience (QoE) by selecting optimal bitrates under constrained network bandwidth, and has been widely deployed in platforms such as YouTube and Netflix. Existing ABR algorithms can be broadly categorized into learning-based and non-learning-based approaches \cite{peroni2025end}. Learning-based methods adopt a data-driven paradigm that does not rely heavily on prior domain knowledge, and typically outperform heuristics on training traces. However, because their policies are tightly coupled to the training data distribution, they are prone to severe degradation under unseen network conditions—so distribution shift in bandwidth largely determines their performance ceiling \cite{zhang2025novel}. A large body of prior work assumes a stationary bandwidth, where throughput follows a fixed distribution or varies within predictable bounds \cite{li2024learning,wang2024mmvs}.

While this assumption enables tractable optimization and analytical solutions, it overlooks the non-stationarity inherent in real-world networks \cite{cao2001nonstationarity}. In practice, network bandwidth distributions shift dramatically due to factors such as 4G-WiFi handoffs, urban congestion, and mobility pattern changes. The mismatch between the distribution of training traces and that of real-world network conditions encountered during inference induces non-stationarity, which undermines the generalization ability of learning-based methods. The inherent unpredictability and diversity of network bandwidth fluctuations introduce a subtle form of non-stationarity. This fundamental discrepancy between training assumptions and operational conditions creates a significant generalization gap that undermines the effectiveness of learning-based ABR algorithms \cite{luo2025sabr,hoffman2025into}.

Current solutions for non-stationarity in adaptive bitrate streaming mainly focus on introducing an additional classifier to predict such changes, such as preference predictors~\cite{wu2024mansy}, network-bandwidth classifiers~\cite{zhang2025novel,11016689}, or encoding prior knowledge of network bandwidth into latent features via an auto-encoder to assist subsequent decision-making~\cite{kan2025merina+}. However, these approaches first rely on strong prior knowledge and second, shift the generalization problem from the policy network to the classifier \cite{bentaleb2024bitrate,wang2024mmvs,plume2024}. Once the external environment falls outside the classifier’s prediction range, the entire pipeline is at risk of failure, and the non-stationarity problem in ABR remains unresolved at a fundamental level.

In non-stationary ABR systems, where network dynamics are unknown and change over time, traditional rule-based and theoretically derived approaches become infeasible. Therefore, we have adopted a data-driven, learning-based approach that captures fluctuations in internal neuron signals caused by external bandwidth changes and utilizes these fluctuations as a control signal. Notably, PA-MoE \cite{he2025plasticity} follows a similar philosophy, addressing non-stationary ABR by monitoring neural plasticity of the learning agent from within the learning-based framework. It monitors learning capacity via neuron dormancy indicators \cite{dohare2024loss} and preserves plasticity through noise perturbations. Analogous to biological neural systems \cite{puderbaugh2023neuroplasticity}, artificial neural networks exhibit degraded plasticity in non-stationary environments, where inactive neurons fail to adapt to environmental fluctuations \cite{dohare2024loss}. By maintaining plasticity, PA-MoE enables the agent to sustain rapid adaptability in non-stationary ABR environments. 

PA-MoE focuses on non-stationarity from the QoE objective itself, while we focus on non-stationarity driven by network bandwidth dynamics. PA-MoE achieves good performance by adding noise, but analyzes the network as a whole rather than examining individual neurons, leaving the root causes of plasticity loss insufficiently analyzed \cite{LyleZNPPD23, gulcehre2022an, Hare_Tortoise}. This paper provides a deeper investigation by analyzing each neuron individually, enabling precise identification and reset of dormant neurons. Existing work typically identifies neurons with zero output as dormant \cite{sokar2023dormant}, but we argue that this forward-propagation-only criterion is incomplete. We therefore propose a novel dual-zero criterion that considers both forward propagation and gradient values: a neuron is classified as silent only when both are zero, indicating complete non-participation in information flow and learning. This distinction is crucial, as a neuron with zero forward output may still actively learn via non-zero gradients, and vice versa.

This rigorous characterization leads to a novel theoretical framework for analyzing neural plasticity loss through the lens of forward and backward propagation in optimization. We formalize the concept of Silent Neurons - neurons exhibiting simultaneous zero forward propagation and zero gradients - and establish their relationship to plasticity degradation in non-stationary environments. Based on this theoretical foundation, we develop Reset Silent Neuron (ReSiN), a method that systematically identifies and reactivates Silent Neurons to maintain neural plasticity. Through extensive experiments analyzing neural network dynamics, we validate both our theoretical analysis and the effectiveness of ReSiN. The results demonstrate that ReSiN not only preserves neural plasticity but also achieves significant performance improvements in non-stationary network resource adaptation scenarios. An intuitive way to understand the relationship between ReSiN and the ABR scenario is as follows: when non-stationarity occurs, certain neurons within the neural network become incapable of perceiving fluctuations in the internal parameters of the ABR system, such as buffer occupancy and bandwidth. Our method identifies these neurons through a bidirectional dormancy metric and resets them, thereby restoring the network's ability to perceive the internal state variables of the ABR system.

Our key contributions are:

\begin{itemize}
    \item \textbf{Problem Analysis:} We provide, to the best of our knowledge, the first systematic analysis of how neural plasticity affects adaptive video streaming under non-stationary bandwidth. Our study shows that internal neural dynamics directly shape the agent’s ability to adapt, thereby deepening the understanding of plasticity in ABR algorithms.
    
    \item \textbf{Theoretical Foundation:} We establish a theoretical framework that links forward and backward propagation to plasticity loss. We formally define Silent Neurons and prove that both types of signals must be considered jointly to accurately characterize conditions under which plasticity is lost. We further derive a tighter performance bound in non-stationary environments.

    \item \textbf{Algorithm Design:} Based on our theoretical insights, we have developed Reset Silent Neuron (ReSiN), a practical method for detecting and reactivating Silent Neurons. This approach allows for continuous learning without the need for prior knowledge of network conditions or environmental characteristics.

    \item \textbf{Implementation and Evaluation:} We implement ReSiN in an adaptive video streaming system and evaluate it on real-world network traces. Using standardized metrics, fixed hyperparameters, and consistent random seeds, we demonstrate that ReSiN yields substantial QoE gains over strong baselines while ensuring fair and reproducible comparisons.
\end{itemize}

The rest of this paper is organized as follows. Section \ref{background_and_motivation} motivates our work and identifies the research problems. Section \ref{StochasticNetworkResourceAdaptationProblem} formulates the system model for stochastic adaptive video streaming. Section \ref{silentneuron} presents our theoretical analysis of Silent Neurons and proposes the ReSiN method. Section \ref{experiment} provides comprehensive experimental evaluation. Section \ref{related_work} discusses related work and contextualizes our contributions. Section \ref{conclusion} concludes with future research directions.

\section{Motivation and Analysis}
\label{background_and_motivation}

In this section, we motivate our approach through a concrete example of ABR under non-stationary bandwidth. We then analyze the underlying causes from a neural plasticity perspective and provide an intuitive explanation of our solution.

\subsection{Impact of Non-stationary Bandwidth on ABR}

Given that Proximal Policy Optimization (PPO) \cite{schulman2017proximal} has been widely adopted as the reinforcement learning algorithm of choice in various state-of-the-art AVS approaches \cite{11016689, luo2025sabr, he2025plasticity}, we likewise employ PPO in this work and conduct our subsequent analyses based on it. To investigate the impact of network bandwidth on performance, we utilize a total of 127 bandwidth traces drawn from HSDPA-based 3G networks \cite{riiser2013commute}, the FCC corpus \cite{kan2022improving}, and the Puffer open dataset \cite{yan2020learning}. We adopt the bandwidth categorization scheme from \cite{zhang2025novel}, classifying traces with an average bandwidth below 2 Mbps as low bandwidth (LBW) and those above 3 Mbps as high bandwidth (HBW). These categorizations enable us to examine how PPO behaves under nonstationary network conditions, which we simulate as abrupt transitions by switching between high and low network bandwidth.

\begin{figure}[!htbp]  
  \centering
  \includegraphics[width=\columnwidth]{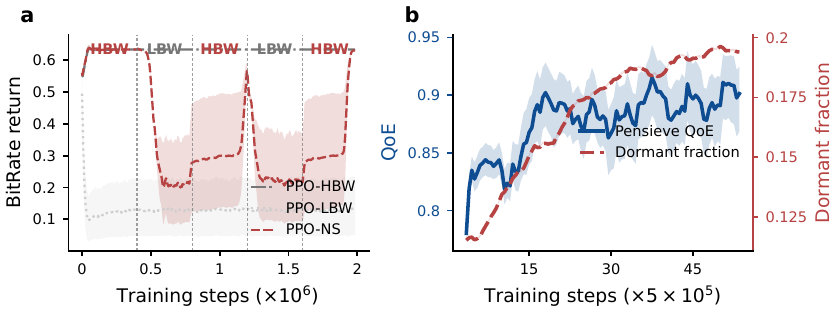}
  \caption{A case study of PPO's performance in adaptive video streaming under different bandwidth conditions.}
  \label{ppo_bitrate_comparison}
\end{figure}

Results in Fig. \ref{ppo_bitrate_comparison} (a) reveal a critical limitation of PPO Agent under non-stationary conditions. When transitioning from HBW to LBW, the agent fails to match the performance baseline established in consistent LBW environments. More significantly, upon returning to HBW conditions, the agent cannot recover its original performance level, despite having previously optimized for this exact environment. Note that the HBW/LBW switching described here is an exaggerated scenario intended to highlight the limitations of the model. However, similar abrupt transitions can occur in practical situations, such as when a model trained in one environment is deployed in another, or when a user's network condition switches from 3G to WiFi. Another form of non-stationarity, which is more gradual, can also occur within a single bandwidth regime. In this case, the same adaptation issue is expected to persist, although it may be less noticeable. This is demonstrated in Fig. \ref{ppo_bitrate_comparison} (b), which shows the evolution of QoE and dormant neurons for Pensieve \cite{mao2017neural} under real-world traces. Even in this scenario, the number of neurons outputting zero increases over time, a phenomenon known as loss of plasticity \cite{sokar2023dormant}. Therefore, we first analyze abrupt non-stationarity as an amplified and more tractable case of this phenomenon, and then apply the resulting algorithm to the subtler non-stationarity observed in real-world traces.

\subsection{Neural Plasticity Analysis for Non-stationary Bandwidth}

In this section, we analyze—through the lens of network plasticity—why PPO tends to lose its learning capability under continuously shifting, non-stationary bandwidth conditions. Prior work has predominantly characterized such plasticity degradation by examining dormant neurons, i.e., neurons whose outputs are identically zero during the forward pass \cite{sokar2023dormant, xu2024drm}, as formally defined in Definition~\ref{dormant_neuron}. Building upon this dormancy metric, we assess the evolution of PPO’s plasticity throughout training. In addition, we further evaluate whether neurons, once entering a dormant state for the first time, remain persistently inactive thereafter and fail to recover, as defined in Definition~\ref{OverlapCoefficientforNeuron}.

\begin{figure}[!htbp]
    \centering
    \includegraphics[width=\columnwidth]{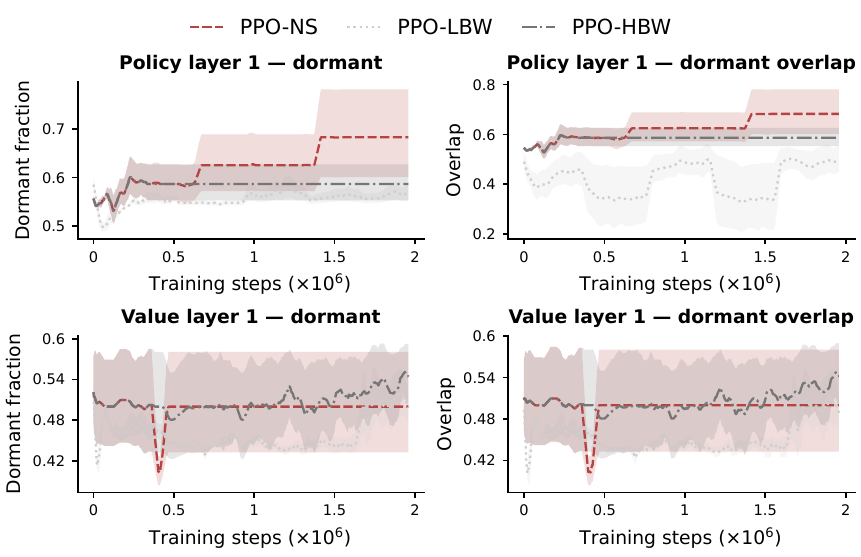}%
    \caption{Evolution of neural network plasticity loss.}
    \label{ppo_dormant_overleap}
\end{figure}

Fig. \ref{ppo_dormant_overleap} (left) shows that a substantial proportion of neurons in both the policy and value networks become dormant, and the number of dormant neurons continues to increase under non-stationary conditions. The right panel further illustrates the fraction of neurons that, once entering a dormant state, remain inactive throughout all subsequent steps. Notably, the proportions and trends in the right panel closely mirror those observed on the left. This suggests that, in non-stationary ABR environments, neurons that become dormant are largely unable to recover, indicating a systematic and persistent degradation of neural plasticity. Once a large portion of neurons lose their plasticity, the network is left with insufficient active capacity to adapt to future environmental changes.

Fig. 2 illustrates the proportion of neurons whose forward-pass activations are exactly zero, which we treat as an indicator of neuronal dormancy. However, we argue that this characterization is incomplete, as it overlooks the backward pass dynamics that are essential for learning and adaptation. Our analysis reveals that true plasticity loss occurs only when neurons exhibit both zero outputs in the forward pass and zero gradients in the backward pass, a condition we term as “Silent Neurons”. This dual-zero state indicates a complete disconnection from both information propagation and learning processes.

\section{System Model}
\label{StochasticNetworkResourceAdaptationProblem}

In this section, we introduce the notation and system model used in our adaptive video streaming framework.

\subsection{Stochastic Adaptive Video Streaming}

Following the adaptive video streaming model \cite{yin2015control}, a video is divided into $I$ consecutive chunks $\{V_1, V_2, \dots, V_I\}$, each with a duration of $t_{i}$-seconds. Each chunk is encoded at multiple bitrate levels $b \in \mathbb{B}$, where $\mathbb{B} = \{b_1, b_2, ..., b_M\}$ represents the available bitrates in ascending order. The size of chunk $i$ encoded at bitrate $b$ is denoted as $d_i(V_i^b)$, which increases monotonically with $b$. The task of the reinforcement learning agent is to select the optimal bitrate for each chunk in order to maximize the quality of the streaming experience. To prevent interruptions during bitrate transitions between segments $V_i$ and $V_{i+1}$, we implement a playback buffer $B_f$. This buffer pre-caches upcoming segments, allowing for concurrent video playback and segment downloading. The buffer occupancy $B_f(t)$ is limited to the range of $[0, B_f^{max}]$, where $B_f^{max}$ is determined by server policies or client storage constraints. However, network bandwidth is non-stationary and exhibits shifting mean and variance, which presents significant challenges for adaptive streaming. Our optimization objective takes into account three key factors: maximizing video quality by selecting higher bitrates during stable periods, minimizing playback interruptions, and maintaining quality stability by avoiding frequent bitrate switches.

The download time for segment $V_i^b$ consists of the network round-trip time $t_{RTT}$ and the data transfer time $d_i(V_i^b)/S_i$, where $S_i$ represents the download speed. To reflect network variability, we incorporate a multiplicative noise factor in the relationship between consecutive segment start download times in $t_{i+1}^{d} = t_{i}^{d} + \left( \frac{d_{i}(V_{i}^{b})}{S_{i}} + t_{RTT} \right) \times \eta(i) + \Delta t_{i}^{w}$, where $\eta(i)$ captures network variability and $\Delta t_i^w$ represents intentional waiting time for buffer management. We define the total network delay as $t_{delay} = \left( \frac{d_{i}(V_{i}^{b})}{S_{i}} + t_{RTT} \right) \times \eta(i)$.

The effective throughput, denoted as $C_{t_{i}}$, is calculated by taking the average download speed and excluding any waiting periods: $C_{t_{i}} = \frac{1}{t_{i+1}^d - t_i^d - \Delta t_i^w} \int_{t_i^d}^{t_{i+1}^d - \Delta t_i^w} S_t dt$. Buffer occupancy evolution captures the interplay between content downloading and playback. When a chunk of content, denoted as $V_{i}$, is successfully downloaded, it adds $t_{i}$ seconds to the buffer. This results in the following dynamics: $B_{f}(t_{i+1}) = \left( \left(B_{f}\left(t_{i} \right) - t_{\text{delay}} \right)_{+} + t_{i} - \Delta t_{i}^{w}  \right)_{+}$, where $(x)_{+}$ represents the non-negative operator $\max(0,x)$. This formulation addresses two critical buffer states: underflow and overflow. Buffer underflow occurs when $B_f(t_i) < t_{\text{delay}}$, which can result in playback interruption. To prevent buffer overflow, the player implements a waiting period: $\Delta t_{i}^{w} = \left( \left(B_{f}\left(t_{i} \right) - t_{\text{delay}} \right)_{+} + t_{i} - B_{f}^{max}  \right)_{+}$. This mechanism maintains buffer occupancy below $B_f^{max}$ while ensuring efficient resource utilization and smooth playback.

\subsection{Objective Function}

The QoE objective is achieved by integrating key factors affecting user satisfaction. The average video quality is measured as $\frac{1}{I} \sum_{i=1}^{I} q(V_i^{b})$, where $q(\cdot)$ is a monotonically non-decreasing quality function. To capture perceptual sensitivity to quality fluctuations, we include a variation penalty $\frac{1}{I-1} \sum_{i=1}^{I-1} |q(V_{i+1}^{b}) - q(V_i^{b})|$. We further account for playback stalls caused by buffer depletion, which are known to heavily degrade user experience. These components are combined in a weighted-sum QoE formulation, allowing for systematic optimization while balancing the trade-offs among quality, stability, and stall avoidance: $QoE_{1}^{I} = \sum_{i=1}^{I} q(V_{i}^{b}) - \mu_{1} \sum_{i=1}^{I-1}|q(V_{i + 1}^{b}) - q(V_{i}^{b})| - \mu_{2} \! \sum_{i=1}^{I} \! \left( \! \left( \! \frac{d_{i}(V_{i}^{b})}{S_{i}} \! +  \!t_{RTT} \! \right) \! \times \! \eta(i) \! - \! B_{f}(t_{i}) \! \right)_{+}.$

The composite QoE metric uses two weighting coefficients, $\mu_1$ and $\mu_2$, to balance the competing objectives of maximizing average quality, stabilizing quality fluctuations, and reducing playback stalls. The system parameter follows \cite{kan2022improving}. This multi-objective formulation enables systematic navigation of the QoE trade-off space, allowing streaming algorithms to adjust their behavior to different deployment requirements and user preferences. To capture the non-linear characteristics of human visual perception, we follow \cite{kan2022improving} and adopt the quality function $\log(\alpha + b_{i})$, where $b_{i}$ denotes the bitrate of chunk $i$, and $\alpha$ is a constant. To account for resource constraints and potential performance degradation at excessively high bitrates, we introduce a penalty term $-\frac{\beta}{b_i}$, with $\beta$ controlling its magnitude. This yields the following quality formulation: $q(V_{i}^{b}) = \log(\alpha + b_{i}) - \frac{\beta}{b_{i}}$. 

In summary, the problem can be expressed as follows:

\begin{align}
    \begin{split}
        & \max_{V_{1}, \cdots, V_{I}, t_{s}} QoE_{1}^{I} \\
        st. & \quad  t_{i+1}^{d} = t_{i}^{d} + \left( \frac{d_{i}(V_{i}^{b})}{S_{i}} + t_{RTT} \right) \times \eta(i) + \Delta t_{i}^{w}; \\
        & \quad C_t = \frac{1}{t_{i+1}^d - t_i^d - \Delta t_i^w} \int_{t_i^d}^{t_{i+1}^d - \Delta t_i^w} S_t dt; \\
        & \quad B_{f}(t_{i+1}) = \left( \left(B_{f}\left(t_{i} \right) - t_{\text{delay}} \right)_{+} + t_{i} - \Delta t_{i}^{w}  \right)_{+}; \\
        & \quad q(V_{i}^{b}) = \log(\alpha + b_{i}) - \frac{\beta}{b_{i}}; \quad B_{f}(t_{i}) \in [0, B_f^{max}]; \\
        & \quad b_{i} \in \mathbb{B}, \quad \forall i = 1, \cdots I ; \quad \eta(i) \sim \mathcal{U}(0.9,1.1).
    \end{split}
\end{align}

\section{Methodology}
\label{silentneuron}

In this subsection, we present the design and implementation of the proposed ReSiN method, including the formulation of the problem as a Markov Decision Process (MDP) for subsequent solution, the theoretical intuition underlying ReSiN, and the detailed algorithmic design and theoretical analysis.

\subsection{Preliminaries}

RL offers a promising approach to adapting network resources by utilizing Markov Decision Processes (MDPs) to model uncertainties. In an MDP, network scenarios and adaptation decisions are represented as states and actions, respectively. Formally, an MDP is defined by a tuple $(\mathcal{X}, \mathcal{A}, \mathcal{P}, \mathcal{R}, \mathcal{\gamma})$. The state space $\mathcal{X}$ encompasses all possible network situations, while the action space $\mathcal{A}$ includes all available adaptation decisions. The state transition function $\mathcal{P}: \mathcal{X} \times \mathcal{A} \rightarrow \Delta(\mathcal{X})$ describes the dynamics of the environment, where $\Delta(\mathcal{X})$ denotes the space of probability distributions over $\mathcal{X}$. Specifically, at time step $t$, given the current state $x_t$ and action $a_t$, the probability of transitioning to the next state $x_{t+1}$ is defined as $p(x_{t+1} \mid x_t, a_t)$. The reward function $\mathcal{R}: \mathcal{X} \times \mathcal{A} \rightarrow \mathbb{R}$ provides immediate feedback $r(x_t, a_t)$, while the discount factor $\gamma \in [0,1)$ balances immediate and future rewards.

While this MDP formulation naturally captures bandwidth variations within a single network through transition probabilities $p(x_{t+1} \mid x_t, a_t)$, it fails to address the fundamental non-stationarity in modern networks. Common scenarios such as switching between 4G and WiFi, transitioning from cellular to tunnel environments, or roaming between carriers introduce structural changes in network characteristics. These changes manifest as shifts in both transition dynamics $\mathcal{P}$ and reward structure $\mathcal{R}$, effectively creating transitions between different MDPs $\{M_1, M_2, ..., M_N\}$, where each $M_{n} = (\mathcal{X}, \mathcal{A}, \mathcal{P}_n, \mathcal{R}_n, \gamma)$ represents a distinct network environment. This non-stationarity fundamentally challenges learning algorithms, as policies optimized for one network environment can fail dramatically in another. Therefore, in non-stationary environments where the underlying MDP changes rapidly, effective adaptation requires sufficient neuronal plasticity. When a large portion of neurons becomes dormant and loses plasticity, the network can no longer adapt to these environmental shifts.

\subsection{State, Action, and Reward Function Specification}

Our state space, action space, and reward function are consistent with those adopted in existing learning-based approaches, including Pensieve \cite{mao2017neural}, Merina \cite{kan2025merina+}, and PA-MoE \cite{he2025plasticity}.

\textbf{State Space:} The input state at time step $t$ is represented as $x_{t} \in \mathbb{R}^{6 \times 8}$, which encodes a temporal window of 8 consecutive time steps with 6-dimensional features at each step. The 6-dimensional state vector comprises the previous bitrate selection $b_{t-1}$, buffer occupancy $B_{f}(t)$, network throughput, network delay $t_{delay}$, chunk sizes across all bitrate options, and the remaining number of chunks. Formally, $x_{t} = [x_{t-7}, x_{t-6}, \cdots , x_{t}] \in \mathbb{R}^{6 \times 8}$, where each $x_{t}$ is a 6-dimensional vector. This temporal encoding enables the agent to capture both instantaneous network conditions and historical trends for robust decision-making.

\textbf{Action Space:} The action space consists of six discrete bitrate levels $\mathcal{A}=\{300, 750, 1200, 1850, 2850, 4300\}$ $\mathrm{kbps}$, where the agent selects an action $a_{t} \in \{0, 1, 2, 3, 4,5\}$ to determine the quality representation of each downloaded chunk.

\textbf{Reward Function:} The reward function is designed to maximize QoE by balancing video quality, quality variation, and playback stalls.

\subsection{Intuition Behind Our ReSiN Method}

We demonstrate the optimization objective of our method through a simple example. Fig.~\ref{ppo_dormant_overleap} shows that neurons suffer from plasticity loss in non-stationary environments. Beyond environmental bandwidth shifts, the optimization objective itself evolves: early training prioritizes exploration, while later stages emphasize performance improvement. This shifting objective causes available neurons to dwindle over time, progressively constraining the solution space explored by the neural network (purple circle, Fig.~\ref{intuitive_illustration}(b)). Consequently, when the environment changes, insufficient parameters remain to adapt. PA-MoE addresses this by preserving neuron plasticity through noise injection, maintaining a larger explorable solution space (blue circle, Fig.~\ref{intuitive_illustration}(b)) and enabling effective adaptation to environmental non-stationarity.

\begin{figure*}[!htbp]
\centering
\includegraphics[width=\textwidth]{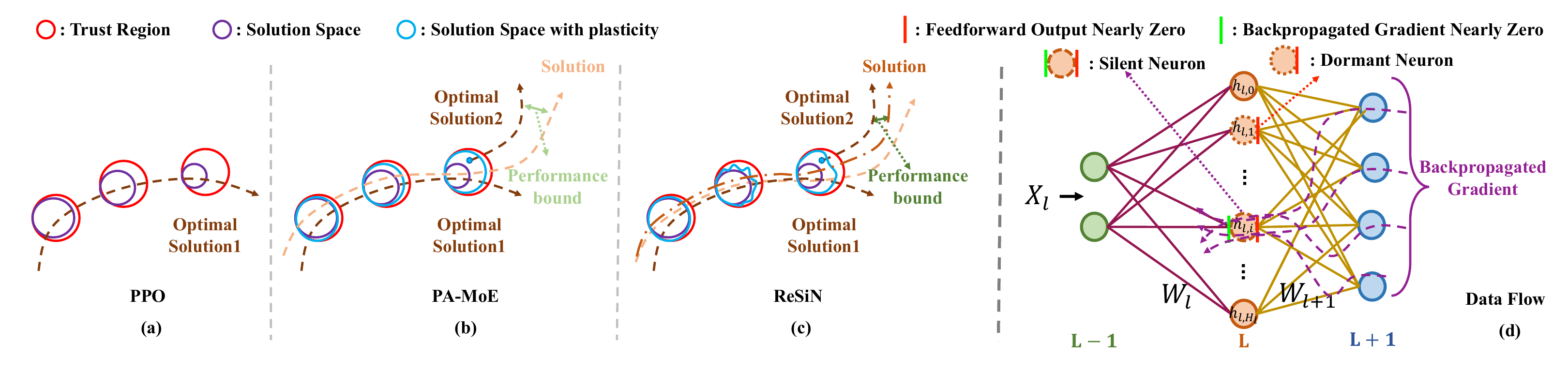}%
\caption{Intuitive Illustration of the Theoretical Improvement. ReSiN provides a tighter performance bound.}
\label{intuitive_illustration}
\end{figure*}

However, not every neuron needs to be injected with noise. Some neurons have already found good parameter values and still retain their own ability to adjust; injecting noise into these neurons may degrade performance. Our ReSiN method precisely identifies neurons on a per-neuron basis in a bidirectional manner, i.e., it targets neurons that neither contribute to the output nor can be effectively adjusted through gradients. More accurate localization of dormant neurons enables a more precise pruning of the solution space, which in turn leads to a tighter performance bound, as illustrated in Fig.~\ref{intuitive_illustration} (c).

\subsection{Dormant Neuron: Theoretical Analysis and Characterization}

In the PPO algorithm, both the actor and critic networks are implemented as neural networks that map state input $x_{t}$ to action output $a_{t}$ and value predictions $V(x_{t})$, respectively. To analyze neuron plasticity within these networks, we consider a feed-forward neural network layer $l$ characterized by its input space $\mathcal{X}_l \subseteq \mathbb{R}^{k_l}$, parameterized by $\mathbf{W}_l \in \mathbb{R}^{k_{l} \times k_{l+1}}$ and $\mathbf{b}_l \in \mathbb{R}^{k_{l+1}}$, and with an activation function $\sigma_l: \mathbb{R} \to \mathbb{R}$ applied elementwise. The forward pass of layer $l$ is $f_l(\mathbf{x}) = \sigma_l(\mathbf{W}^{T}_{l} \mathbf{x} + \mathbf{b}_{l})$, and the output of the $i$-th neuron in layer $l$ is $h_{l, i}(\mathbf{x})=\sigma_l\left(\mathbf{w}_{l, i}^T \mathbf{x}+b_{l, i}\right)$, where $\mathbf{w}_{l,i}^T$ is the $i$-th row of $\mathbf{W}_l$. Based on previous research \cite{sokar2023dormant,xu2024drm}, the dormant neuron can be defined in Definition \ref{dormant_neuron}.

\begin{definition}[{\textbf{Dormant Neuron}}]
\label{dormant_neuron} For a layer $l$ with $H_{l}$ neurons, the dormancy index $s_{l,i}$ of neuron ($l,i$) is defined as:

\begin{equation}
    s_{l,i} = \frac{\mathbb{E}_{\mathbf{x}\in D}|h_{l,i}(\mathbf{x})|}{\frac{1}{H_{l}}\sum_{j \in h}\mathbb{E}_{\mathbf{x}\in D}|h_{l, j}(\mathbf{x})|},
\end{equation}
where $\mathbb{E}_{\mathbf{x} \in D}$ represents expectation over inputs $\mathbf{x}$ drawn from distribution $D$.
\end{definition}


Fig.~\ref{intuitive_illustration} (d) illustrates the data flow within the PPO neural network. Traditionally, a dormant neuron is defined as one whose forward activation is close to zero relative to other neurons. However, we argue that a near-zero forward activation does not necessarily indicate true inactivity. As long as the gradient flowing through that neuron is non-zero, the neuron can still contribute to learning and should therefore be considered active. To address this limitation, we introduce the concept of the Silent Neuron. A Silent Neuron is characterized by both zero forward activation and zero backward gradient, indicating that it is genuinely inactive during training. Such neurons can be safely reinitialized without affecting the current learning dynamics, making them useful for handling subsequent tasks and mitigating capacity saturation in non-stationary adaptive video streaming settings.

We next provide a theoretical discussion of Dormant Neurons and Silent Neurons to more rigorously characterize their behaviors and clarify the relationship between the two. To characterize the theoretical properties of dormant neurons, we impose the following mild regularity conditions. These assumptions follow standard regularity conditions commonly adopted in neural network theory, including smoothness of activations \cite{hornik1991approximation,cybenko1989approximation}, boundedness of neuron outputs used in generalization and stability analyses \cite{hardt2016train}, and non-degeneracy conditions preventing representational collapse \cite{santurkar2018does}. Before presenting the formal assumptions, we note that these regularity conditions are naturally satisfied by the ABR system model defined in Section \ref{StochasticNetworkResourceAdaptationProblem}. Specifically, the continuity assumption holds for all activation functions used in our PPO implementation; the boundedness assumption is guaranteed by the physical constraints of the ABR system, where buffer occupancy $B_{f}(t) \in [0, B_{f}^{max}]$ and network throughput measurements are normalized by construction; and the non-degeneracy assumption is empirically verified in Fig. \ref{dormantandzerogradient}, which confirms that the average layer-wise neuron activation remains strictly positive throughout all training conditions. These assumptions are therefore not restrictive idealizations but faithful reflections of the ABR learning system.

\begin{assumption}[\textbf{Continuity and Differentiability}]
\label{continuity} The neuron output $h_{l,j}(\mathbf{x})$ is continuously differentiable with respect to $\mathbf{x}$: $h_{l,j} \in C^{1}(\mathbb{R}^{k_{l}})$.
\end{assumption}

\begin{assumption}[\textbf{Boundedness}]
\label{boundedness} There exists $M > 0$ such that $\mathbb{E}_{\mathbf{\mathbf{x}} \in D}|h_{l,j}(\mathbf{\mathbf{x}})| \leq M$ for all $j \in h$.
\end{assumption}

\begin{assumption}[\textbf{Non-degeneracy}]
\label{non_degeneracy}
There exists $m > 0$ such that $\frac{1}{H_{l}}\sum_{j\in h}\mathbb{E}_{\mathbf{x}\in D}|h_{l,j}(\mathbf{x})| \geq m$, ensuring a strictly positive denominator in the dormancy index.
\end{assumption}

\begin{theorem}[\textbf{Bidirectional Dormancy Characterization}]
\label{BidirectionalDormancyCharacterization}
Let $D \subseteq \mathcal{X}_l \subseteq \mathbb{R}^{k_l}$ be the domain from which inputs $\mathbf{x}$ are drawn. Consider layer $l$ with $H_l$ neurons, and let the activation of the $i$-th neuron be $h_{l,i}(\mathbf{x}) = \sigma_l(\mathbf{w}_{l,i}^\top \mathbf{x} + b_{l,i})$, where $\sigma_l$ is continuously differentiable and applied elementwise. The neuron’s \emph{dormancy index} is defined by $s_{l,i} = \frac{\mathbb{E}_{\mathbf{x}\in D}\bigl|h_{l,i}(\mathbf{x})\bigr|}{\frac{1}{H_l}\sum_{j=1}^{H_l} \mathbb{E}_{\mathbf{x}\in D}\bigl|h_{l,j}(\mathbf{x})\bigr|}$.

Under Assumptions~\ref{continuity}–\ref{non_degeneracy}, the following statements are equivalent:
\begin{itemize}
    \item[(A)] \textbf{Dormancy:} $s_{l,i}=0$, equivalently $\mathbb{E}_{\mathbf{x}\in D}|h_{l,i}(\mathbf{x})|=0$.
    \item[(B)] \textbf{Zero gradient on $D$ and one zero activation:}
    \[
    \nabla h_{l,i}(\mathbf{x}) = 0 \quad \forall\, \mathbf{x}\in D,
    \qquad
    \exists\, \mathbf{x}_0 \in D : h_{l,i}(\mathbf{x}_0)=0.
    \]
\end{itemize}
\end{theorem}
\begin{proof} See Appendix \ref{BidirectionalDormancyCharacterizationTheoremProof}.
\end{proof}

Theorem \ref{BidirectionalDormancyCharacterization} shows that our dormancy index $s_{l,i}$ exactly captures when a neuron is truly inactive. If $s_{l,i}=0$, the neuron is silent in both the forward and backward passes—it outputs zero everywhere on the data domain and receives no gradient. Conversely, if a neuron has zero gradient everywhere and outputs zero at least once, then it must be identically zero on the entire domain, which again implies $s_{l,i}=0$. Thus, the theorem establishes that statistical dormancy and functional silence are equivalent, forming the theoretical basis for identifying neurons that can be safely reset or reused as Silent Neurons.

While the theoretical characterization of dormant neurons offers meaningful insights into the mechanisms underlying plasticity loss, it also comes with several important limitations. Firstly, the characterization of dormant neurons heavily relies on the choice of input distribution $D$. This means that different distributions may lead to varying conclusions about the presence of dormant neurons. Secondly, the zero-activation condition (B) is only required to hold over the distribution $D$, rather than the entire input space $\mathcal{X}_l$. This could potentially limit the robustness of detecting dormancy. Additionally, in practical implementations, dormant neurons are typically identified when their dormancy index $s_{l,i}$ falls below a small threshold $\epsilon$, rather than requiring exact zero values. This introduces additional complexity to the theoretical guarantees. To more precisely identify and intervene on dormant neurons under these practical limitations, we introduce the Reset Silent Neurons method.

\subsection{Silent Neurons and Reset Mechanism}

While the Dormant Neuron metric captures neural inactivity from the perspective of forward activations, its implications for backward propagation hold only under restrictive assumptions. To overcome these limitations and enable a more complete understanding of neural network dynamics, we propose an approach that directly examines gradient behavior in response to network inputs.

\begin{equation}
g_{\mathbf{w}} = \nabla_{\mathbf{w}}\sum_{n}f_{\mathbf{w}}(x_n),
\end{equation}
where $f_{\mathbf{w}}$ denotes the neural network function and $x_{n}$ represents input states in a batch. This formulation provides three key advantages over the gradient structure used in PPO. \textbf{First}, as a loss-independent gradient, it directly evaluates the network's sensitivity to inputs, avoiding the masking effects that occur in loss-based gradients, where neurons may appear inactive due to zero loss gradients despite exhibiting substantial forward activations. \textbf{Second}, by aggregating gradients over all outputs $\sum_{n}f_{\mathbf{w}}(x_n)$, it provides a comprehensive assessment of neuron activity that reflects their contributions across the full decision space, thereby avoiding biases introduced by task-specific optimization objectives. \textbf{Third}, this approach promotes broader state-space coverage, enabling the detection of neurons that are essential for particular state representations but may not influence the current loss. 
Based on this analysis, we formally define Silent Neurons as follows:
\begin{definition}[{\textbf{Silent Neuron}}]
\label{SilentNeuron}
Let $H_{l}$ be the number of neurons in layer $l$. The activity index $\xi_{l, i}$ for neuron $(l, i)$ is defined by both its forward output and backward gradient:
\begin{equation}
\xi_{l,i} = \frac{\mathbb{E}_{\mathbf{x}\in D}|h_{l,i}(\mathbf{x})| + \mathbb{E}_{\mathbf{x}\in D}|g_{l,i}(\mathbf{x})|}{\frac{1}{H_{l}}\sum_{j\in h}(\mathbb{E}_{\mathbf{x}\in D}|h_{l,j}(\mathbf{x})|)},
\end{equation}
where $h_{l,i}(\mathbf{x})$ denotes the output of neuron $(l,i)$ for input $\mathbf{x}$, and $g_{l,i}(\mathbf{x}) = \frac{\partial}{\partial h_{l,i}}\sum_{n} f_{\mathbf{w}}(\mathbf{x}_{n})$ represents its gradient computed from the aggregated network outputs.
\end{definition}

In contrast to the definition of Dormant Neurons in \textit{Definition~\ref{dormant_neuron}}, Silent Neurons additionally require the backward gradient $\mathbb{E}_{\mathbf{x}\in D}|g_{l,i}(\mathbf{x})|$ to vanish, thereby incorporating an explicit criterion on gradient inactivity. In defining Silent Neurons, we use the same normalization denominator as dormant neurons to maintain comparability across layers to stay consistent with the dormant neuron formulation, facilitating future theoretical analysis. However, in practice, we intentionally avoid explicit gradient normalization in order to preserve the absolute functional sensitivity of neurons. This is because adaptive optimizers like Adam can handle differences in scale, making additional layer-wise scaling unnecessary and potentially unstable. This is especially crucial in non-stationary environments, where the significance of certain neurons may vary depending on the operating conditions. Theorem \ref{SilentNeuronCharacterization} justifies treating the definition and implementation of Silent Neurons as separate aspects. Building upon the formal characterization in \textit{Definition~\ref{SilentNeuron}}, we present \textit{Theorem~\ref{SilentNeuronCharacterization}}, which offers a fundamental theoretical result that further elucidates the underlying properties and behavior of silent neurons.

\begin{theorem}[\textbf{Silent Neuron Characterization}]
\label{SilentNeuronCharacterization}
Let $D \subseteq \mathcal{X}_l \subseteq \mathbb{R}^{k_l}$ be the domain from which inputs $\mathbf{x}$ are drawn. Consider layer $l$ with $H_{l}$ neurons. For the $i$-th neuron in layer $l$, let $h_{l, i}(\mathbf{x})=\sigma_l\left(\mathbf{w}_{l, i}^T \mathbf{x}+b_{l, i}\right)$, where $\sigma_{l}$ is continuously differentiable and applied elementwise. The neuron's activity index is defined as:
\begin{equation}
\xi_{l,i} = \frac{\mathbb{E}_{\mathbf{x}\in D}|h_{l,i}(\mathbf{x})| + \mathbb{E}_{\mathbf{x}\in D}|g_{l,i}(\mathbf{x})|}{\frac{1}{H_{l}}\sum_{j \in h}(\mathbb{E}_{\mathbf{x}\in D}|h_{l, j}(\mathbf{x})|)}, \nonumber
\end{equation}
Under Assumptions~\ref{continuity}–\ref{non_degeneracy}, suppose there exist constants $M_h, M_g > 0$ such that $\mathbb{E}_{\mathbf{x}\in D}|h_{l,i}(\mathbf{x})| \le M_h$, $\mathbb{E}_{\mathbf{x}\in D}|g_{l,i}(\mathbf{x})| \le M_g$. Then, for an arbitrarily small constant $\epsilon > 0$, the following statements are equivalent:
\begin{itemize}
\item (A) Silence: For any $\epsilon > 0$, the activity index satisfies $\xi_{l,i} < \epsilon$.
\item (B) Vanishing forward and backward activity (on $D$): For any $\delta > 0$, we have
\begin{equation}
\mathbb{E}_{\mathbf{x} \in D}|h_{l, i}(\mathbf{x})| < \delta \quad \text{and} \quad \mathbb{E}_{\mathbf{x} \in D}|g_{l, i}(\mathbf{x})| < \delta. \nonumber
\end{equation}
\end{itemize}
\end{theorem}
\begin{proof}
See Appendix \ref{SilentNeuronCharacterizationTheoremProof}.
\end{proof}

Theorem~\ref{SilentNeuronCharacterization} extends our understanding of neural inactivity by addressing key limitations in the traditional notion of dormant neurons. The Silent Neuron framework provides a more flexible and realistic alternative. By introducing the activity index $\xi_{l, i}$, which jointly evaluates forward activations and backward gradients over the distribution $D$, we replace the strict zero requirement with a threshold-based condition $\xi_{l, i} < \epsilon$. This relaxation offers two key advantages: it better captures near-silent neurons that arise in practice, where exact zeros are uncommon, and it yields greater robustness across varying input distributions by emphasizing consistently low activation and gradient patterns. Notably, simply relaxing the traditional dormancy condition from $s_{l, i} = 0$ to $s_{l, i} < \epsilon$ does not guarantee the gradient properties required by our theory. This motivates the Silent Neuron formulation, which explicitly incorporates both activation and gradient information.

\subsection{The Silent Neuron–Enhanced PPO Algorithm}
\label{sec:resin_ppo}

While practical PPO involves coupled actor-critic dynamics, we follow \cite{he2025plasticity} and analyze the tracking performance of our update rule under a general non-stationary objective $L_t(w)$  to isolate the impact of silent-neuron-based reinitialization. Silent neurons, characterized by simultaneously low forward activations and backward gradients, effectively represent units with severely reduced plasticity. Their negligible contribution to both computation and gradient-based learning makes them natural targets for controlled parameter perturbation and reset. We now integrate this mechanism into PPO and obtain a Silent Neuron–Enhanced PPO algorithm. Let $\mathbf{w}_t \in \mathbb{R}^d$ denote the flattened policy and value parameters at iteration $t$, and let $L_t(\mathbf{w}_t)$ be the standard PPO surrogate objective defined in \cite{schulman2017proximal}. The vanilla PPO update can be written as
\begin{equation}
    \mathbf{w}_{t+1}
    =
    \mathbf{w}_t
    - \eta \,\nabla L_t(\mathbf{w}_t),
    \label{eq:ppo_gd_update}
\end{equation}
where $\eta>0$ is the learning rate.

\paragraph{Projection onto the silent subspace.}
At iteration $t$, we use the activity indicators $\xi^{g}_{l, i}$ and $\xi^{d}_{l ,i}$, which reflect backward and forward activation levels respectively, to identify the set of silent neurons
\[
\mathcal{S}_t
=
\bigl\{ (l,i) : \xi^{g}_{l,i} \le \delta_1,\;
                 \xi^{d}_{l,i} \le \delta_2 \bigr\}.
\]
We associate this set with an orthogonal projection matrix $\Pi_t \in \mathbb{R}^{d\times d}$ satisfying
\[
\Pi_t^2 = \Pi_t, 
\qquad
\Pi_t^\top = \Pi_t,
\qquad
\mathrm{rank}(\Pi_t) = d_{s,t},
\]
where $d_{s,t}$ is the dimension of the silent neuron subspace at iteration $t$, and in practice $d_{s,t} \ll d$.

\paragraph{Update rule: modeling resets as subspace perturbations.}

Reinitializing the parameters associated with silent neurons can be analytically treated as adding a stochastic perturbation restricted to the corresponding coordinate subspace \cite{frankle2018the}. This perturbation mechanism is used in PA-MoE \cite{he2025plasticity}, where controlled noise is injected into selected coordinates to restore plasticity. Formally, instead of injecting isotropic noise into the full parameter space, we restrict stochastic perturbations to the silent neuron subspace:
\begin{equation}
    \mathbf{w}_{t+1}
    =
    \mathbf{w}_t
    - \eta \,\nabla L_t(\mathbf{w}_t)
    + \eta \gamma \Pi_t \boldsymbol{\epsilon}_t,
    \qquad
    \boldsymbol{\epsilon}_t \sim \mathcal{N}(0, I_d),
    \label{eq:resin_update}
\end{equation}
where $\gamma>0$ controls the perturbation strength and $\Pi_t$ is the projection onto the coordinates associated with silent neurons. The noise is still sampled from a standard Gaussian in $\mathbb{R}^d$, but only the projected component $\Pi_t \boldsymbol{\epsilon}_t$ affects the parameters. Consequently, stochastic exploration is concentrated exclusively on neurons identified as silent, while leaving active neurons unaffected. Let the resulting perturbation term be
\[
E_t := \eta\gamma \Pi_t \boldsymbol{\epsilon}_t.
\]
This formulation allows reset operations to be analyzed within the same mathematical framework as subspace-restricted noise injection, enabling convergence-style bounds analogous to those established for PA-MoE \cite{he2025plasticity}.

\begin{theorem}[\textbf{Tracking Analysis of Subspace-Restricted Updates for PPO}]
\label{thm:resin-error-bound}
Under Assumptions (A4)--(A6) with step-size $0 < \eta \leq 1/L$, and with the update rule, $\boldsymbol{w}_{t+1} = \boldsymbol{w}_t - \eta\, \nabla L_t\bigl(\boldsymbol{w}_t\bigr) + \eta\, \gamma\, \Pi_{t}\epsilon_t, \quad \epsilon_t \sim \mathcal{N}\left(0, I_d\right)$, the average squared error satisfies, $\frac{1}{T} \sum_{t=1}^{T} \mathbb{E}\|e_t\|^2 \le \frac{2}{\mu \eta T} \left(\mathbb{E}||e_{0}||^{2} + \frac{2 P_{T}^{2}}{\mu \eta} \right) + \frac{2 \eta \gamma^{2}}{\mu} \cdot \frac{1}{T}\sum_{t=0}^{T-1}d_{s,t}$, where the error is defined as $e_t = \boldsymbol{w}_t - \boldsymbol{w}_t^*$, and the path length of the optimal parameters is $P_T = \sum_{t=1}^{T-1}\|\boldsymbol{w}_{t+1}^* - \boldsymbol{w}_t^*\|$.
\end{theorem}
\begin{proof}
See Appendix \ref{TheSilentNeuronEnhancedPPOPerformanceBound}.
\end{proof}

Compared to the PA-MoE's bound $\frac{2 \eta \gamma^{2}}{\mu} \cdot d$, and for the same algorithmic parameters, the new expression replaces the global worst-case constant $d$ by the empirical average $\frac{1}{T}\sum_{t=0}^{T-1} d_{s,t}$. Hence, the resulting upper bound is always no larger and is strictly tighter whenever $d_{s,t} < d$ for some $t$.

\paragraph{Resin-PPO Algorithm}
Algorithm~\ref{alg:resin_ppo} summarizes the overall Silent Neuron–Enhanced PPO procedure. After init reset delay $F$ iterations, the algorithm evaluates neuron activity, constructs the projection $\Pi_t$, and applies the selective perturbation update in Eq.~\eqref{eq:resin_update}. In practice, $\Pi_t$ is implemented as a binary mask over parameters associated with silent neurons.

\begin{algorithm}[!htbp]
\caption{Silent Neuron–Enhanced PPO (ReSiN-PPO)}
\label{alg:resin_ppo}
\begin{algorithmic}[1]
\REQUIRE Policy and value network parameters $\mathbf{w}$, 
learning rate $\eta$, perturbation scale $\gamma$, 
activity thresholds $\delta_1, \delta_2$, 
maximum training iterations $T$, 
init reset delay $F$.
\FOR{iteration $t =1$ to $T$}
    \STATE Interact with the environment and collect trajectories.
    \STATE Compute PPO loss $L_t(\mathbf{w})$ and gradient $g_t = \nabla_{\mathbf{w}}. L_t(\mathbf{w})$
    \IF{$t > F$}
        \STATE Compute neuron activity scores $\xi^{g}_{l,i}$ and $\xi^{d}_{l,i}$ for all hidden units,
        \STATE Identify silent neurons 
        $\mathcal{S}_t = \{(l,i): \xi^{g}_{l,i} \le \delta_1,\, \xi^{d}_{l,i} \le \delta_2\}$,
        \STATE Construct projection $\Pi_t$ (or mask) onto parameters associated with $\mathcal{S}_t$,
        \STATE Sample noise $\boldsymbol{\epsilon}_t \sim \mathcal{N}(0, I_d)$,
        \STATE \textbf{Update:}
        $\mathbf{w} \leftarrow \mathbf{w} - \eta g_t + \eta\gamma \Pi_t \boldsymbol{\epsilon}_t$.
    \ELSE
        \STATE \textbf{Standard PPO update:}
        $\mathbf{w} \leftarrow \mathbf{w} - \eta g_t$
    \ENDIF
\ENDFOR
\end{algorithmic}
\end{algorithm}

\paragraph{Computational Complexity.}

The additional computational cost introduced by ReSiN-PPO remains small relative to standard PPO training. Forward-activity scores $\xi^{d}_{l,i}$ are collected as part of the normal forward computation and incur no extra cost. After the initial warm-up period of $F$ iterations, computing backward-activity scores $\xi^{g}_{l,i}$ requires one additional backward pass per iteration to obtain neuron-wise gradient magnitudes, which has the same asymptotic complexity as a standard PPO backward pass. Identifying silent neurons and constructing the projection (or mask) over the associated parameters requires only element-wise thresholding and masking operations, both of which scale linearly with the number of trainable parameters, i.e., $O(d_{\mathbf{w}})$. Consequently, the per-iteration overhead in the reset phase is $O(d_{\mathbf{w}})$, matching the complexity of a standard PPO update up to a small constant factor. In practice, this overhead is modest because the extra backward pass shares the same computational graph as the PPO backward pass, and the thresholding and masking operations are lightweight element-wise operations.

\section{Experiments}
\label{experiment}

Our experiments are divided into two parts. The first part examines the impact of abrupt switching between high and low network bandwidth on plasticity. This is done by analyzing the performance degradation of the ABR system under non-stationary conditions and observing changes in the proportion of dormant neurons in the agent's internal network. This is to verify our theoretical hypothesis. The second part evaluates the effectiveness of our proposed method in improving performance under subtle, gradual, unpredictable, and diverse non-stationarity found in real-world bandwidth traces. We also test whether our Silent Neuron hypothesis holds true under these conditions. Additionally, we assess the generalizability of our method to more advanced network architectures and compare its performance against PA-MoE to confirm the consistency with our theoretically tighter bound.

\subsection{Experiment Setting}

The experiments were performed on a system equipped with an Intel(R) Core(TM) i5-10400 CPU @ 2.90GHz, without GPU acceleration. We train the agent with a learning rate of $1 \times 10^{-4}$. Each training iteration collects 2,000 rollout steps, which form a batch of size 2,000. This batch is further divided into minibatches of size 62 and optimized for 5 update epochs per iteration. For the advantage estimation, we use generalized advantage estimation (GAE) with discount factor $\gamma=0.99$ and $\lambda=0.95$. Following the implementation in \cite{dohare2024loss}, our initial reset delay is $F=1000$, which provides sufficient warm-up time for the PPO agent to establish stable neural activation patterns before Silent Neuron identification begins. Unless otherwise stated, the thresholds for silent neuron detection are fixed to
$\delta_{1}=0.05$ and $\delta_{2}=0.5$ in all experiments. In the performance comparison experiments, we use the same network architecture as Pensieve \cite{mao2017neural} and Merina \cite{kan2025merina+}.

\textbf{Network Trace Datasets:} Follow \cite{he2025plasticity}, our evaluation uses throughput traces collected from three different real-world sources. First, the FCC dataset \cite{kan2022improving} is constructed by concatenating randomly selected logs belonging to the “Web browsing’’ category from the August 2016 public release. Second, we utilize the Puffer dataset \cite{yan2020learning}, which provides measurements from on-demand video streaming over a wide range of access technologies, such as wired networks, cellular systems (3G/4G/5G), and Wi-Fi. Third, we employ a set of HSDPA-based 3G traces \cite{riiser2013commute}, originally collected from mobile devices streaming videos in transit on subways, buses, trains, ferries, and trams. We split these traces into 127 samples for training and 142 for testing, with no shared elements between the two splits. Among the training traces, those with throughput below 2 Mbps are categorized as low-bandwidth, while those above 3 Mbps are categorized as high-bandwidth, and are used for experiments on non-stationary network bandwidth. Upon publication, both the datasets used and the source code will be made publicly available.

\subsection{Non-stationarity System Analysis}

What is the reason for the decrease in performance shown in Fig. \ref{ppo_bitrate_comparison} (a) when there is a sudden switch between high and low bandwidth? In this subsection, we will explore this question by analyzing the connection between the ABR system's internal state variables and the various components of QoE performance.

\begin{figure*}[!htbp]
\centering
\includegraphics[width=0.75\textwidth]{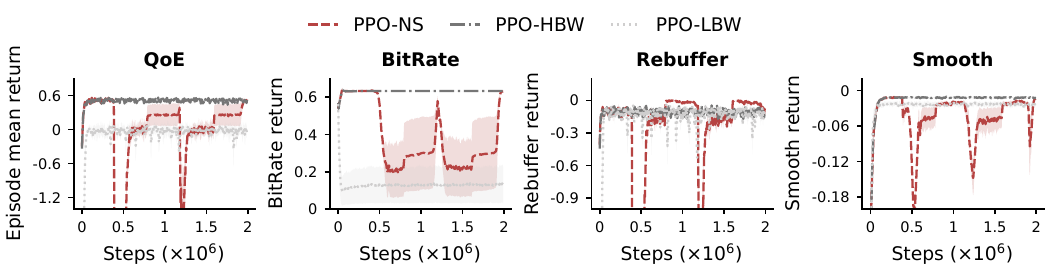}%
\caption{Performance comparison of PPO under different network conditions in terms of QoE metrics. The figure shows the learning curves of the total QoE reward and its components.}
\label{ppo_seperate_reward}
\end{figure*}

Fig.~\ref{ppo_seperate_reward} illustrates the significant impact of non-stationarity on the learning system's plasticity. The agent is unable to regain its pre-transition performance across all QoE metrics, indicating a persistent loss of adaptability. The agent shows poor adaptation during network transitions in both directions. When bandwidth decreases, it continues to select high bitrates despite reduced capacity. Conversely, when bandwidth increases, it fails to take advantage of the additional resources, resulting in low bitrates and a sudden surge in rebuffer time and reward. This bidirectional adaptation failure highlights the system's state during periods of non-stationarity: a fundamental loss of plasticity, leaving the agent unable to effectively respond to dynamic network conditions. This rigidity leads to suboptimal performance and a degraded user experience.

\begin{figure}[!htbp]
\centering
\includegraphics[width=\columnwidth]{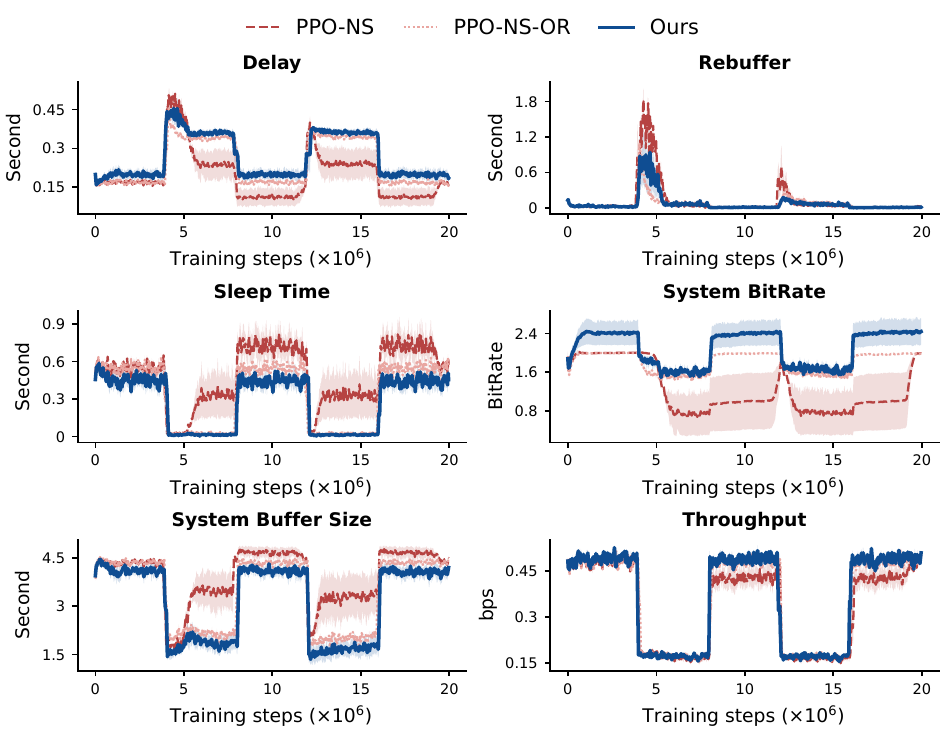}%
\caption{System metrics under different plasticity maintenance strategies in non-stationary environments are compared in the figure. The strategies include standard PPO (PPO-NS), PPO with output-based reset using dormant neuron detection (PPO-NS-OR), and our proposed method of PPO with intersection-based reset using the proposed Silent Neuron criterion (Ours).}
\label{system_insight_info}
\end{figure}

Fig.~\ref{system_insight_info} reveals how non-stationarity affects system behavior and resource utilization. Network transitions trigger abrupt changes, notably in rebuffering duration. The agent struggles to adapt, leading to suboptimal resource utilization: excessive buffering, extended sleep intervals, and underutilized throughput. This translates to degraded user experience, with the system exhibiting either over-conservative (excessive buffering and sleep time) or over-aggressive (increased rebuffering) behavior during transitions, indicating an inability to maintain optimal operation under changing conditions.

\subsection{Neural Plasticity Loss and Adaptation Capability}

In this section, we will analyze from the perspective of neuron outputs and gradients in order to understand why neural networks struggle to quickly adapt and learn in non-stationary environments. Fig.~\ref{dormantandzerogradient} provides important insights into the loss of neural plasticity. It is evident from the figure that once neurons become dormant, they tend to remain inactive in subsequent iterations, as indicated by the overlap between the dormant neuron curves and their persistence metrics. This phenomenon can be described as a "plasticity trap", where neurons lose their ability to adapt to new conditions in network resource adaptation tasks.

\begin{figure}[!htbp]
\centering
\includegraphics[width=\columnwidth]{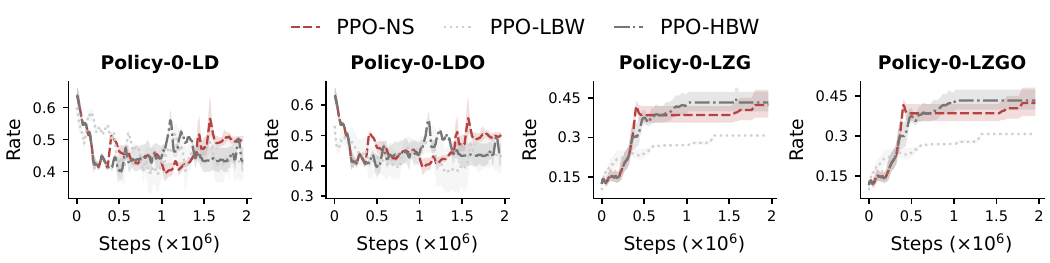}
\includegraphics[width=\columnwidth]{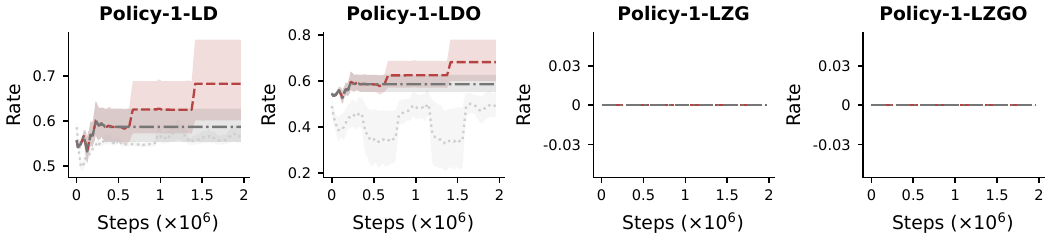}
\includegraphics[width=\columnwidth]{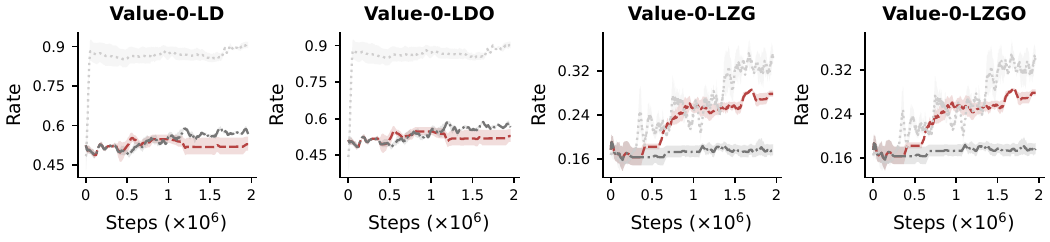}
\includegraphics[width=\columnwidth]{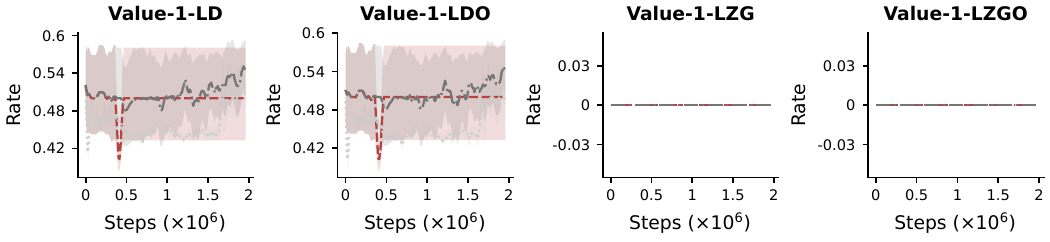}
\caption{The activity patterns of neurons across network layers and components, such as the ratio of dormant neurons (Layer-Dormant, LD), the persistence of dormant neurons (DormantOverLep, LDO), the ratio of zero gradients (ZeroGradient, LZG), and the persistence of zero gradients (ZeroGradientOverLep, LZGO) are examined for both policy (Policy-0, Policy-1) and value (Value-0, Value-1) networks.}
\label{dormantandzerogradient}
\end{figure}

Fig.~\ref{dormantandzerogradient} also illustrates an increasing proportion of dormant and zero-gradient neurons during training in non-stationary environments, indicating a progressive loss of plasticity. Interestingly, similar patterns emerge in both high and low bitrate scenarios, suggesting that plasticity loss is inherent in RL tasks where expected future reward distributions evolve and network conditions fluctuate. The analysis reveals layer-wise differences in plasticity loss: the shallow layer (Layer 0) shows more severe loss compared to the deeper layer (Layer 1) in both policy and value networks. This suggests that the network's basic feature processing is more vulnerable to plasticity loss than higher-level processing in adaptive video streaming tasks.

The findings provide strong evidence that neural plasticity loss is a fundamental challenge in network resource adaptation, especially in non-stationary environments. The progressive and persistent nature of this loss explains why conventional approaches struggle to maintain performance under dynamic network conditions.

\subsection{Performance Evaluation}

In this subsection, we will quantitatively evaluate the performance gains achieved by our silent neuron resetting method in the scenario depicted in Fig. \ref{ppo_bitrate_comparison} (a).

\begin{figure}[!htbp]
\centering
\includegraphics[width=\columnwidth]{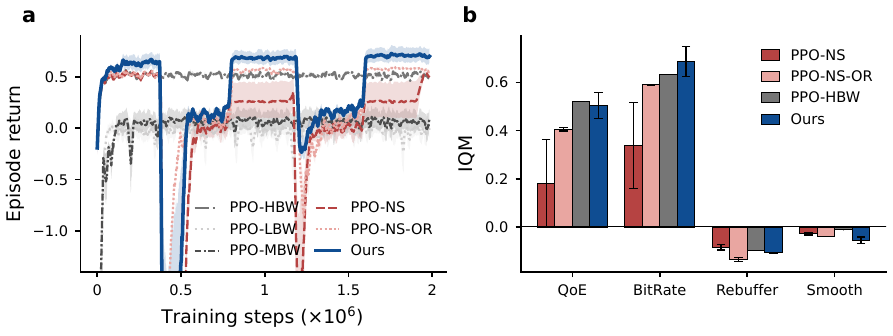}%
\caption{(a): Performance comparison of PPO variants across different network conditions and plasticity maintenance strategies. The figure shows the learning curves of QoE rewards for six scenarios: PPO-HBW, PPO-LBW, PPO in mixed-bandwidth environments (PPO-MBW), PPO-NS, PPO-NS-OR, and Ours. (b): Comparison of QoE Methods using the IQM  (Interquartile Mean) with a focus on the 25th-75th percentile returns. This approach aims to highlight the relative performance of different methods while minimizing the impact of outliers.}
\label{final_performance_comparation}
\end{figure}

Fig.~\ref{final_performance_comparation} (a) shows that our Silent Neuron reset method outperforms conventional approaches, addressing plasticity loss caused by both non-stationary network conditions and inherent RL processes. It surpasses the performance of the conventional approach in stable high-bandwidth conditions (PPO-HBW) and significantly outperforms the dormant-based reset method. These results highlight the importance of considering both forward outputs and backward gradients for robust adaptation in dynamic environments.

Further analysis of the reward components over time reveals the underlying mechanism of our method's superior performance, as shown in Fig.~\ref{final_performance_comparation} (b). Our Silent Neuron reset approach demonstrates a sophisticated balance in optimizing multiple QoE metrics simultaneously. Specifically, while maintaining low rebuffering events and ensuring smooth bitrate transitions, our method exhibits more aggressive yet stable bitrate selection behavior. This optimal trade-off is achieved through enhanced plasticity maintenance, allowing the agent to make more informed decisions without compromising system stability. This behavior is particularly noteworthy as it indicates that our method not only preserves adaptation capability but also enables more efficient exploration of the action space, resulting in improved user experience through higher video quality without sacrificing playback continuity.

\subsection{Analysis of Neural Activity Patterns and Method Effectiveness}

In this subsection, we will analyze whether the observed performance gains observed in our method align with our theoretical predictions.

\begin{figure}[!htbp]
\centering
\includegraphics[width=\columnwidth]{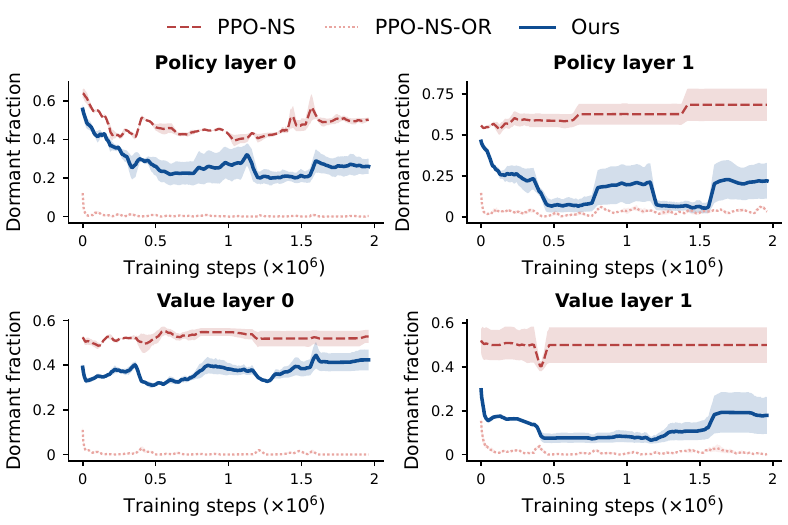}%
\caption{Dormancy patterns in policy and value networks, specifically in layers 0-1. It explores how plasticity maintenance strategies affect learning and neural dynamics.}
\label{correlation_dormant_reward}
\end{figure}

Fig.~\ref{correlation_dormant_reward} illustrates that dormant neurons do not necessarily indicate a loss of plasticity. Our Silent Neuron reset method produces more dormant neurons compared to output reset, but it still achieves better performance in non-stationary environments. This supports our theory that neurons with zero outputs but non-zero gradients can improve network expressiveness and adaptability, challenging the traditional belief that dormant neurons lead to reduced plasticity. These empirical results provide evidence for our theoretical framework on neural plasticity in deep RL systems.

\subsection{Ablation Experiment}
\label{ablation_experiment}

To validate the effectiveness and generality of our proposed ReSiN framework, we conducted comprehensive ablation studies focusing on the Reset mechanism. In particular, we evaluated ReSiN's performance by varying the gradient reset thresholds and dormancy thresholds. To assess the overall performance of the algorithm, we conducted ablation studies using various learning rates. The result is shown in Fig. \ref{ablation_dormant}.

\begin{figure}[!htbp]
\centering
\includegraphics[width=\columnwidth]{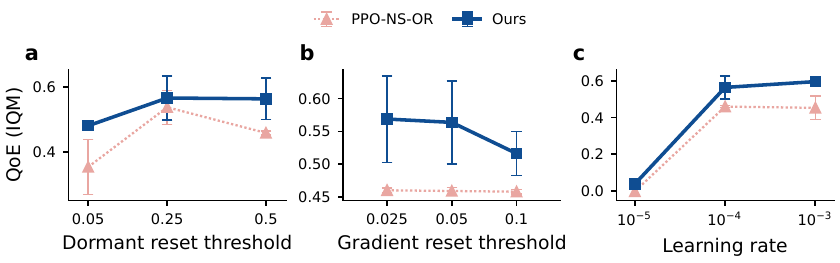}%
\caption{(a): Performance comparison under different dormant Reset threshold. (b): Performance comparison under different gradient Reset threshold. (c): Performance comparison under different learning rates.}
\label{ablation_dormant}
\end{figure}

Our proposed approach has consistently demonstrated superior QoE across a range of learning rates and reset thresholds. 

\subsection{Non-stationarity Analysis under Real-World Traces}

Fig. \ref{ppo_bitrate_comparison} (b) demonstrates that loss of plasticity also occurs in real-world scenarios. In light of this, we purposely avoid manually categorizing or altering trace bandwidth in this subsection to test the effectiveness of our proposed method under more realistic conditions.

\begin{figure*}[!htbp]
\centering
\includegraphics[width=\textwidth]{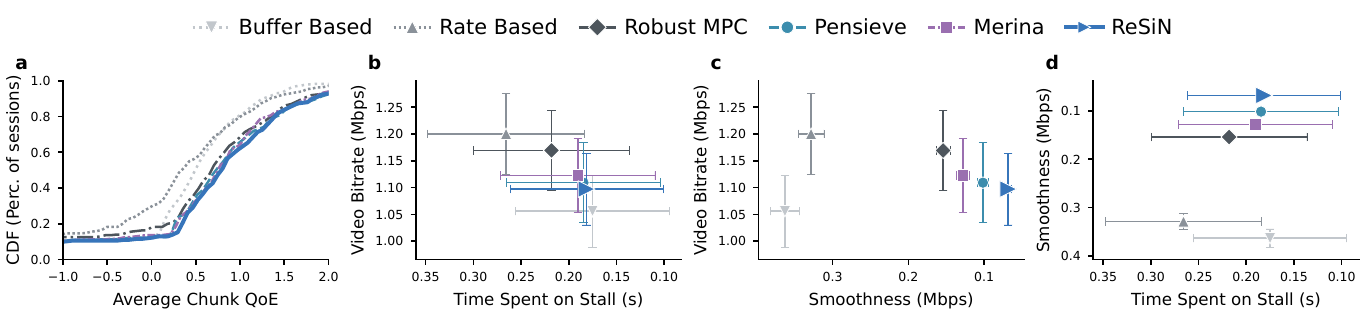}%
\caption{Comparing ReSiN with recent ABR algorithms over the Train dataset.}
\label{ppo_seperate_reward_train}
\end{figure*}

\begin{figure*}[!htbp]
\centering
\includegraphics[width=\textwidth]{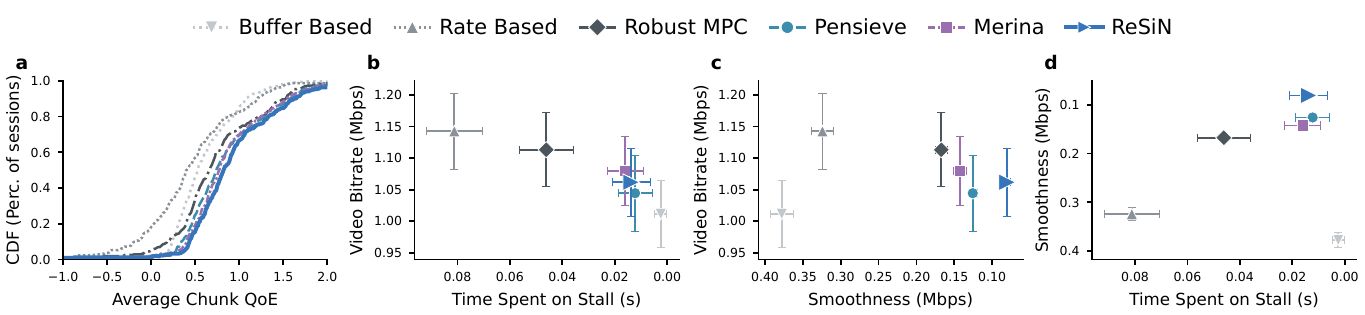}%
\caption{Comparing ReSiN with recent ABR algorithms over the Test dataset.}
\label{ppo_seperate_reward_test}
\end{figure*}

\begin{table*}[t]
  \begingroup
  \centering
  \caption{QoE and its components for different ABR algorithms on test dataset (mean over episodes).}
  \label{tab:result}
  \begin{tabular}{lcccc}
    \toprule
    Algorithm &
    Mean QoE $\uparrow$ &
    Bitrate (Mbps) $\uparrow$ &
    Rebuffer Time (s) $\downarrow$ &
    Smoothness (Mbps) $\downarrow$ \\
    \midrule
    \multicolumn{5}{l}{\textbf{Rule-based methods}} \\
    Buffer Based & $0.623$ & $1.011 \pm 0.053$ & $0.002 \pm 0.002$ & $0.378 \pm 0.015$ \\
    Rate Based & $0.470$ & $1.143 \pm 0.060$ & $0.081 \pm 0.011$ & $0.324 \pm 0.014$ \\
    Robust MPC & $0.748$ & $1.113 \pm 0.058$ & $0.046 \pm 0.010$ & $0.167 \pm 0.008$ \\
    \addlinespace[0.4em]
    \multicolumn{5}{l}{\textbf{Learning-based methods}} \\
    Pensieve & $0.866$ & $1.044 \pm 0.060$ & $0.012 \pm 0.006$ & $0.126 \pm 0.006$ \\
    Merina & $0.869$ & $1.080 \pm 0.054$ & $0.016 \pm 0.007$ & $0.142 \pm 0.008$ \\
    PA-MoE & $0.914$ & $1.063 \pm 0.051$ & $0.009 \pm 0.005$ & $0.109 \pm 0.004$ \\
    ReSiN (Ours) & $\textbf{0.923}$ & $1.062 \pm 0.053$ & $0.014 \pm 0.007$ & $0.080 \pm 0.005$ \\
    ReSiN-MoE (Ours) & $\textbf{0.932}$ & $1.041 \pm 0.051$ & $0.005 \pm 0.003$ & $0.087 \pm 0.004$ \\
    \bottomrule
  \end{tabular}
  
  \vspace{0.3em}
  {\footnotesize
  \textit{Note:} $\uparrow$ indicates larger is better; $\downarrow$ indicates smaller is better.
  Bitrate, Rebuffer Time, and Smoothness are reported as mean $\pm$ 95\% Confidence Interval (CI);
  Mean QoE is reported as mean only.}
  \endgroup
\end{table*}

We keep both the network architecture and PPO hyperparameters consistent with those used in Pensieve \cite{mao2017neural}. The experimental outcomes on the training and testing traces are illustrated in Fig.~\ref{ppo_seperate_reward_train} and \ref{ppo_seperate_reward_test}. Fig.~\ref{ppo_seperate_reward_train} visualizes the cumulative distribution functions (CDFs) of average QoE across all sessions and algorithms in the training split, together with a breakdown of the contributing QoE metrics—bitrate level, playback smoothness, and rebuffering duration. Fig.~\ref{ppo_seperate_reward_test} presents the analogous results on the test split. From these comparisons, we observe that ReSiN consistently delivers substantial QoE improvements on both datasets, and even outperforms the meta–reinforcement learning approach Merina \cite{kan2025merina+}. Table \ref{tab:result} quantifies the final performance of different algorithms under real-world trace data.

One potential concern is whether the performance improvements observed in real-world traces, which often exhibit subtle non-stationarity, align with our theoretical explanation.

\begin{figure}[!htbp]  
  \centering
  \includegraphics[width=\columnwidth]{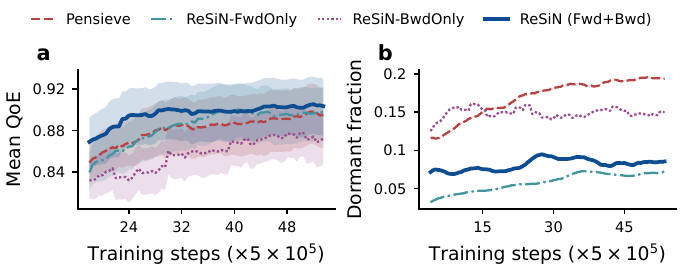}
  \caption{Performance gains and dormancy metrics under real-world trace training.}
  \label{ppo_reward_ablation_comparison}
\end{figure}

Fig. \ref{ppo_reward_ablation_comparison} clearly demonstrates that Pensieve suffers from a lack of adaptability, and that resetting either dormant neurons or Silent Neurons can improve its performance. It is worth noting that resetting Silent Neurons (ReSiN, Fwd+Bwd), which includes a larger number of neurons with zero forward output (ReSiN, FwdOnly), results in even greater improvements. This supports the validity and reasoning behind our theory. On the other hand, resetting only neurons with zero backward gradients (ReSiN-BwdOnly) performs the worst. This is because a zero gradient typically indicates a local or global optimum, and resetting these neurons can disrupt an already optimal state, leading to a decrease in performance.

\subsection{Plug to Mixture of Experts}

To validate the effectiveness of ReSiN, we apply it to a MoE network architecture. The PA-MoE paper \cite{he2025plasticity} has already demonstrated that MoE possesses a certain ability to preserve plasticity. In this subsection, we investigate whether ReSiN can similarly achieve performance improvements when integrated into a MoE network. The MoE-based method equipped with ReSiN is referred to as ReSiN-MoE.

\begin{figure}[!htbp]
\centering
\includegraphics[width=2.0in]{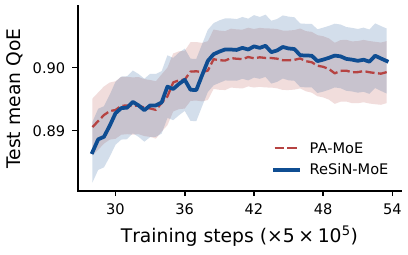}%
\caption{Comparison of test results during the training process.}
\label{ablation_training}
\end{figure}

To obtain conclusions with stronger statistical significance, we save the model every 500 training steps and then evaluate each saved checkpoint on the test set using three different random seeds. The resulting test curves are further smoothed using a moving-window filter. As shown in Fig.~\ref{ablation_training}, the model performance gradually improves as the number of training steps increases. ReSiN-MoE begins to outperform PA-MoE after approximately $4000 \times 500$ steps. Before that point, the two methods exhibit similar performance because only a small number of neurons have lost plasticity. However, as training progresses, an increasing number of neurons lose plasticity, and accurately identifying and resetting these neurons enables ReSiN-MoE to achieve a clear performance advantage.

\begin{figure}[!htbp]
\centering
\includegraphics[width=\columnwidth]{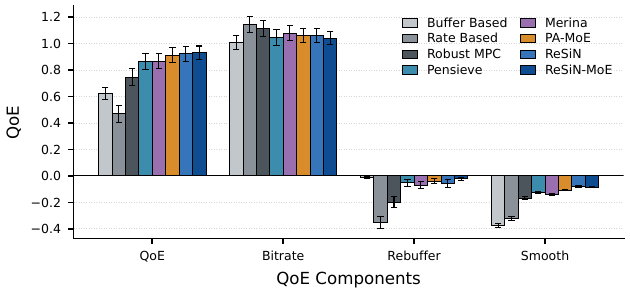}%
\caption{Performance comparison under different neural network.}
\label{ablation_moe_performance}
\end{figure}

We evaluate the final trained models on the test set. As shown in Fig.~\ref{ablation_moe_performance}, applying ReSiN directly to Pensieve achieves even better performance than PA-MoE. This indicates that, without modifying the network architecture, ReSiN alone can surpass the current state-of-the-art method, further highlighting the large potential for improvement in this direction. Moreover, applying ReSiN to the MoE architecture yields additional gains, demonstrating that ReSiN is an architecture-agnostic, stackable performance enhancement technique.

\section{Related Work}
\label{related_work}

This paper addresses the fundamental challenge of non-stationary network bandwidth in dynamic resource adaptation systems. We organize related work into three key areas: Section \ref{NonstationaryEnvironment} examines existing approaches for handling network bandwidth variations, highlighting their limitations in non-stationary environments. Section \ref{OptimizationMethod} reviews optimization methods in adaptive video streaming, which serves as our driving application for studying non-stationary network resource adaptation. Section \ref{NeuralPlasticity} surveys recent advances in plasticity-aware neural networks, particularly their capacity for maintaining adaptability in dynamic environments. Throughout our discussion, we emphasize how our proposed ReSiN method advances the state-of-the-art in each area by effectively maintaining neural plasticity under non-stationary conditions.

\subsection{Non-stationary Environment}
\label{NonstationaryEnvironment}

Although Non-Stationary MDPs \cite{cheung2020reinforcement} and Continual Reinforcement Learning \cite{abel2023definition} have been defined and theoretically analyzed for some time, the challenge of non-stationary network environments poses fundamental difficulties for adaptive bitrate streaming systems. While Plume addresses the imbalanced distribution of network bandwidth data through clustering and prioritized trace sampling \cite{plume2024}, this approach fails to fully capture the non-stationary nature of the problem within its reinforcement learning framework. By modeling a non-stationary problem within a stationary Markov Decision Process (MDP) framework, it encounters inherent limitations in handling truly dynamic network conditions.  To address highly volatile network conditions, some approaches have turned to meta-reinforcement learning \cite{wang2024mmvs}. Meta-learning strives to develop systems that can "learn how to learn" across multiple tasks \cite{li2023metaabr}, enabling rapid adaptation to new environments. However, while meta-learning enhances model generalization, it still relies on an underlying assumption that training tasks share certain stationary properties in their distributions. This makes it suboptimal for truly non-stationary environments where such assumptions may not hold. Alternative approaches like Self-Play Framework have shown promise in handling dynamic objective functions, though their application to network bandwidth optimization remains limited \cite{huang2021zwei,li2024optimizing}.

A significant number of existing approaches combine bandwidth prediction with search algorithms to optimize video chunk selection \cite{lebreton2024long,wang2024enhancing,yin2024learning}. However, these methods' heavy reliance on prediction accuracy becomes particularly problematic in non-stationary network environments, where the underlying bandwidth distribution experiences temporal shifts. The violation of the independently and identically distributed (i.i.d.) data assumption in non-stationary conditions fundamentally challenges these prediction-based approaches. As network dynamics evolve, models trained on historical data struggle to capture emerging patterns, resulting in suboptimal chunk selection decisions that ultimately degrade streaming quality.

\subsection{Optimization Method}
\label{OptimizationMethod}

The challenge of optimizing video streaming systems has spawned diverse algorithmic approaches, each with distinct limitations in handling real-world complexity. Traditional optimization-based methods, while mathematically rigorous, require comprehensive knowledge of system dynamics and constraints \cite{spiteri2020bola}. Their computational complexity often renders them impractical for real-time decision-making scenarios. Similarly, dynamic programming approaches face inherent scalability limitations due to the curse-of-dimensionality, as their state space grows exponentially with system variables. These methods also depend heavily on accurate transition probability models between states, which are often difficult to obtain in practice. While Lyapunov control-based methods offer theoretical stability guarantees, their practical implementation is hindered by stringent assumptions about system dynamics and the complexity of optimizing control parameters.

Model Predictive Control (MPC) represents another significant approach, but its effectiveness is intrinsically tied to the accuracy of network bandwidth predictions \cite{lebreton2022adaptive,lin2024adaptive}. In non-stationary network environments, where bandwidth distributions experience temporal shifts, MPC faces substantial challenges. These out-of-distribution scenarios lead to significant deviations between predicted and actual bandwidth patterns, cascading into suboptimal control decisions that manifest as degraded QoE metrics, including increased latency and buffer underflows \cite{alomar2023causalsim,yin2015control}.

Recent innovations have attempted to address these limitations, such as the Two-Stage Deep Reinforcement Learning approach for 360-degree Video Streaming Scheduling, which incorporates an auxiliary agent for prediction error compensation \cite{bi2024two}. However, this solution introduces its own challenges: significant computational overhead and an inflexible error correction mechanism that struggles to accommodate varying degrees of prediction discrepancies. Furthermore, research has shown that sequential optimization of the two stages does not guarantee global optimality for the complete two-stage optimization problem \cite{he2025understanding}, highlighting the need for more sophisticated approaches that can handle the inherent complexity of video streaming optimization.

\subsection{Neural Plasticity}
\label{NeuralPlasticity}

Reinforcement learning systems, under both model-based \cite{fu2025knowledge} and model-free algorithms \cite{dohare2024loss}, demonstrate a marked tendency to lose plasticity when operating in non-stationary environments. Current research approaches this challenge through two distinct methodologies. The first focuses on preventive measures during the training process, employing specialized techniques such as distribution-regulating normalization methods \cite{lyle2024normalization} and sophisticated weight constraint mechanisms. These mechanisms serve dual purposes: controlling parameter magnitudes and maintaining proximity to initial distributions \cite{elsayed2024weight,kumar2023maintaining}.

The second methodology addresses networks that have already experienced plasticity loss through parameter resetting approaches \cite{dohare2024loss, juliani2024study, liu2024neuroplastic}. This category encompasses several innovative solutions, including Neuroplastic Expansion \cite{liu2024neuroplastic}, which enables incremental network growth through strategic neuron addition, and Continual Backpropagation \cite{dohare2024loss}, which employs systematic identification and reinitialization of underperforming neurons. Similarly, methods like ReDO \cite{sokar2023dormant}, GraMa \cite{liumeasure} and Plasticity Injection \cite{nikishin2024deep} maintain network adaptability through strategic neuron management during training. However, while these approaches offer practical solutions, they fall short of explaining the fundamental mechanisms driving plasticity loss \cite{LyleZNPPD23, gulcehre2022an, disentanglingcausesplasticityloss, lewandowski2024directionscurvatureexplanationloss, Hare_Tortoise}. To address this limitation, we propose a  \textbf{forward and backward based} metric for quantifying neuronal plasticity, providing a more nuanced and theoretically grounded framework for understanding network plasticity characteristics.

\section{Conclusions and Future Work}
\label{conclusion}

This paper presents a comprehensive investigation of neural plasticity in network resource adaptation in non-stationary environments. Our theoretical analysis explains the decline in performance under non-stationary conditions by attributing it to a loss of neural plasticity. To address this issue, we propose the use of Silent Neurons, a novel metric that takes into account both forward outputs and backward gradients, and develop a corresponding reset mechanism. Our experiments confirm the existence of plasticity loss and demonstrate the effectiveness of our solution. Interestingly, our findings also suggest that neurons with zero outputs but active gradients can actually improve network adaptability. Overall, our results showcase the superior performance of our approach in non-stationary network environments.

Looking forward, several promising directions emerge for future research. First, our framework could be extended to more complex scenarios where non-stationarity arises from various sources beyond network bandwidth, such as adversarial perturbations or user-induced variations. Second, while our Silent Neuron metric provides a more comprehensive measure of neural plasticity, investigating alternative or complementary indicators of plasticity could further enhance our understanding of neural network adaptation in dynamic environments. These extensions would contribute to developing more robust and adaptive systems for real-world applications. The specific mechanisms by which ABR internal system states — such as buffer occupancy and bandwidth — induce plasticity loss in neural networks merit further analysis. Building on such analysis to develop more customized, ABR-specific solutions to plasticity loss represents a highly promising direction for future research.

\bibliographystyle{IEEEtran}
\bibliography{reference}

\begin{IEEEbiography}[{\includegraphics
[width=1in,height=1.25in,clip,
keepaspectratio]{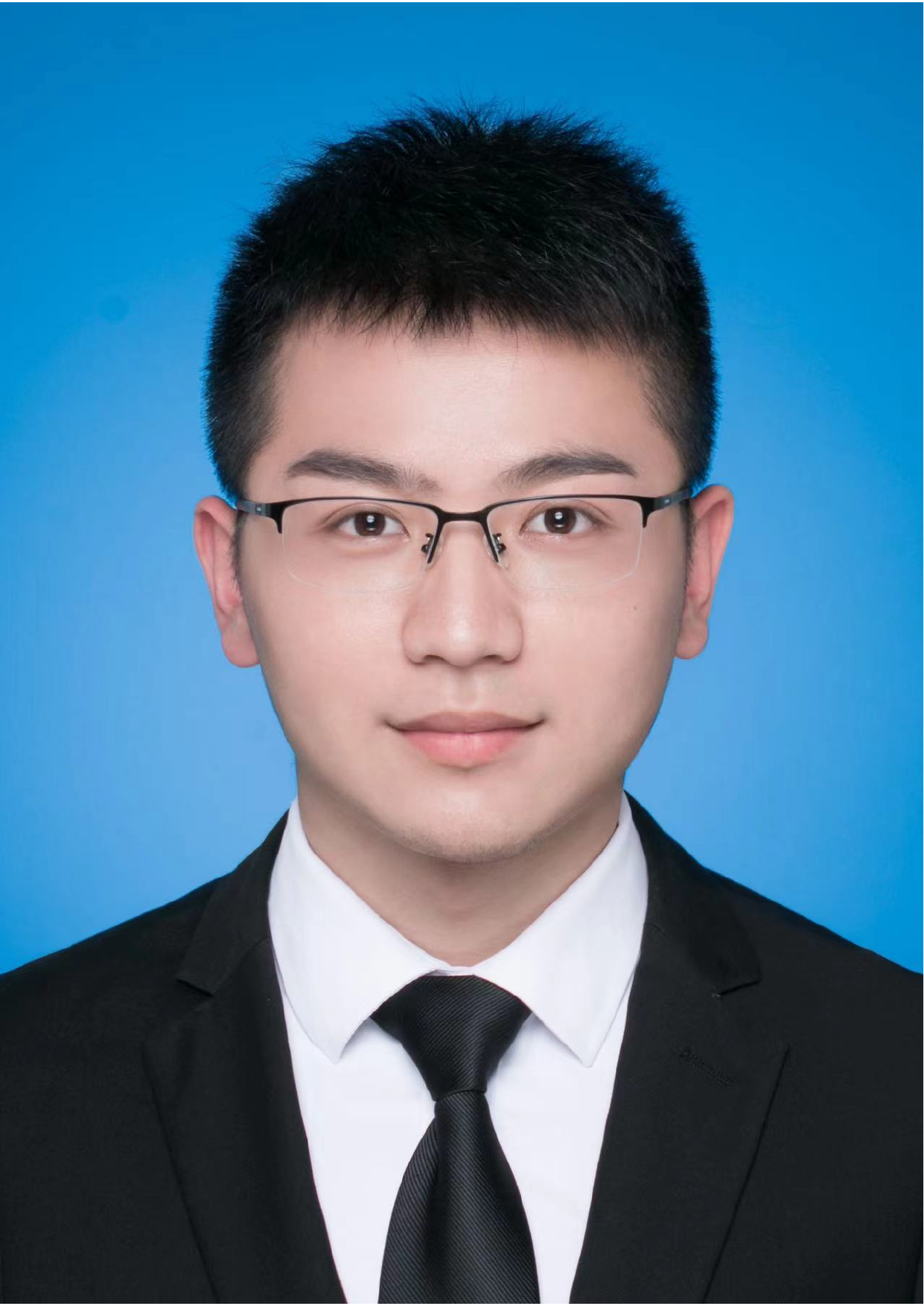}}]
{Zhiqiang He} is currently pursuing Ph.D. degree at the Graduate School of Informatics and Engineering, The University of Electro-Communications in Tokyo, Japan. He received MS degree in Control Science and Engineering from Northeastern University, Shenyang, China. He previously worked at Baidu and InspirAI, where he developed a master-level AI for the game Landlord that outperformed professional players. His research interests focus on deep reinforcement learning and its control applications. 
\end{IEEEbiography}


\begin{IEEEbiography}
[{\includegraphics[width=1in,height=1.25in,clip,keepaspectratio]{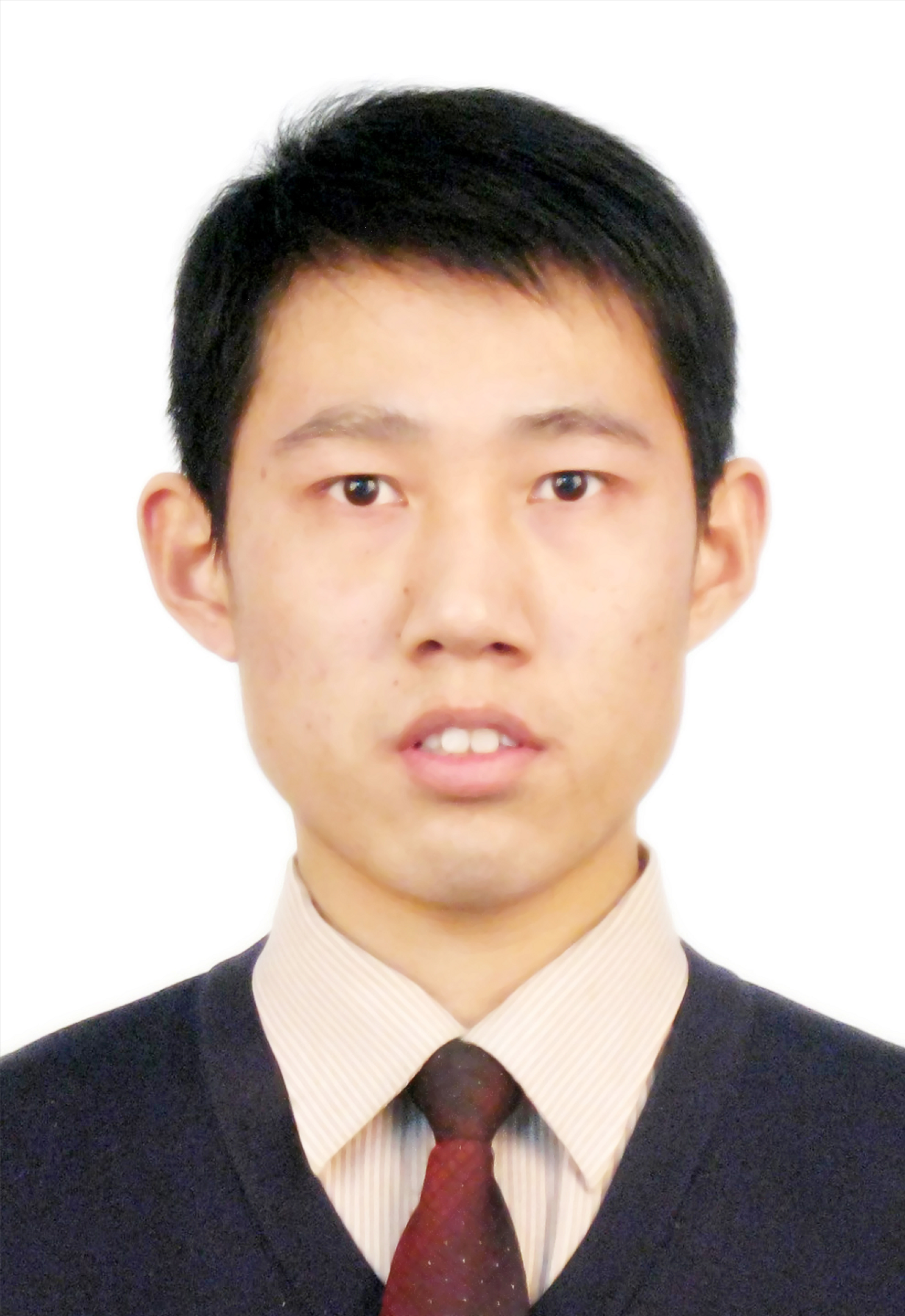}}]
{Zhi Liu} (S’11-M’14-SM’19) received the Ph.D. degree in informatics in National Institute of Informatics. He is currently an Associate Professor at the University of Electro-Communications. His research interest includes video network transmission and MEC. He is now an editorial board member of IEEE Transactions on Multimedia, IEEE Networks and Internet of Things Journal. He is a senior member of IEEE.
\end{IEEEbiography}


\newpage 
\clearpage

\appendix

\subsection{The definition of Overlap Coefficient for Neuron}

\begin{definition}[\textbf{Overlap Coefficient for Neuron}]
\label{OverlapCoefficientforNeuron} The overlap coefficient measures the similarity between the set of dormant or zero gradient neurons in each layer of a neural network at the current iteration and last iteration \cite{sokar2023dormant}. Let $\mathrm{A}$ represent the set of dormant neurons at the current iteration, and $\mathrm{B}$ represent the set of dormant and zero gradient neurons at the previous iteration. The overlap coefficient between $\mathrm{A}$ and $\mathrm{B}$ is defined as follows: $\operatorname{overlap}(\mathrm{A}, \mathrm{B})=\frac{|\mathrm{A} \cap \mathrm{B}|}{\min (|\mathrm{A}|,|\mathrm{B}|)}$. This metric is used to measure the proportion of neurons that remain consistent over time, providing insights into the network's insight state.
    
\end{definition}

\subsection{Bidirectional Dormancy Characterization Theorem Proof}
\label{BidirectionalDormancyCharacterizationTheoremProof}

Before presenting the proof of the Bidirectional Dormancy Characterization Theorem, we first establish several auxiliary Lemmas that will be used in the analysis. These results follow directly from Assumptions~\ref{continuity}–\ref{non_degeneracy} and provide the technical foundations required for the subsequent proof.

\begin{lemma}[\textbf{Forward Dormancy Lemma}]
    \label{ForwardDormancyLemma}
    If $s_{l,i} = 0$, then $h_{l,i}(\mathbf{x}) = 0$ for all $\mathbf{x} \in D$. 
\end{lemma}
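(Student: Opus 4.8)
The plan is to reduce the claim to an elementary integration fact and then use the regularity hypotheses to upgrade an almost-everywhere statement to an everywhere statement. First I would observe that, by Assumption~\ref{non_degeneracy}, the denominator $\frac{1}{H_l}\sum_{j\in h}\mathbb{E}_{\mathbf{x}\in D}|h_{l,j}(\mathbf{x})|$ is bounded below by $m>0$, and, by Assumption~\ref{boundedness}, it is finite; hence $s_{l,i}$ is a well-defined finite real number, and the equation $s_{l,i}=0$ is equivalent to the vanishing of the numerator, i.e. $\mathbb{E}_{\mathbf{x}\in D}|h_{l,i}(\mathbf{x})|=0$.

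Next, since $|h_{l,i}(\cdot)|$ is a nonnegative measurable function whose integral against $D$ is zero, standard measure theory gives $|h_{l,i}(\mathbf{x})|=0$ for $D$-almost every $\mathbf{x}$; equivalently, the zero set $Z=\{\mathbf{x}:h_{l,i}(\mathbf{x})=0\}$ has full $D$-measure. It then remains to promote this to ``$h_{l,i}(\mathbf{x})=0$ for all $\mathbf{x}\in D$'', which I read as the statement that $h_{l,i}$ vanishes on the support of $D$.

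This is where Assumption~\ref{continuity} enters. Suppose toward a contradiction that $h_{l,i}(\mathbf{x}_0)\neq 0$ for some $\mathbf{x}_0$ in the support of $D$. Since $h_{l,i}\in C^1(\mathbb{R}^{k_l})$ is in particular continuous, there is an open ball $U\ni\mathbf{x}_0$ on which $|h_{l,i}(\mathbf{x})|\geq |h_{l,i}(\mathbf{x}_0)|/2>0$. Because $\mathbf{x}_0$ lies in the support of $D$, we have $D(U)>0$, so $\mathbb{E}_{\mathbf{x}\in D}|h_{l,i}(\mathbf{x})|\geq \tfrac{1}{2}|h_{l,i}(\mathbf{x}_0)|\,D(U)>0$, contradicting the vanishing of the numerator established above. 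Hence $h_{l,i}(\mathbf{x})=0$ for all $\mathbf{x}\in D$.

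The main obstacle is essentially conceptual rather than computational: it is pinning down what ``$\mathbf{x}\in D$'' is supposed to mean. Taken literally as every point of the ambient domain $\mathcal{X}_l$, the conclusion is false unless $D$ has full support, so the argument must either restrict attention to $\mathrm{supp}(D)$ (as above) or silently assume full support; the continuity hypothesis in Assumption~\ref{continuity} is precisely the ingredient that makes the almost-everywhere-to-everywhere passage legitimate once the support is fixed. Everything else---nonnegativity of $|h_{l,i}|$ and finiteness/positivity of the normalizing denominator---is immediate from Assumptions~\ref{boundedness} and~\ref{non_degeneracy}, so I would keep that part brief.
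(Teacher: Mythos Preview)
Your proposal is correct and follows essentially the same three-step approach as the paper: use non-degeneracy to reduce $s_{l,i}=0$ to $\mathbb{E}_{\mathbf{x}\in D}|h_{l,i}(\mathbf{x})|=0$, invoke nonnegativity to get $h_{l,i}=0$ almost everywhere, then use continuity and a neighborhood argument to upgrade this to everywhere on $D$. Your explicit discussion of the support-of-$D$ subtlety is a careful refinement that the paper's proof leaves implicit, but the underlying strategy is the same.
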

\begin{proof}
    Since $s_{l,i} = 0$, by definition,
    \begin{equation}
        0=s_{l, i}=\frac{\mathbb{E}_{\mathbf{x} \in D}\left|h_{l, i}(\mathbf{x})\right|}{\frac{1}{H_l} \sum_{j=1}^{H_l} \mathbb{E}_{\mathbf{x} \in D}\left|h_{l, j}(\mathbf{x})\right|}. \nonumber
    \end{equation}
    The non-degeneracy assumption guarantees that the denominator is bounded below by $m>0$, and thus, $\mathbb{E}_{\mathbf{x} \in D}\left|h_{l, i}(\mathbf{x})\right|=0$. Because $\left|h_{l, i}(\mathbf{x})\right| \geq 0$, a nonnegative function has zero expectation if and only if it is zero almost everywhere on $D$. Hence, $h_{l, i}(\mathbf{x})=0 \quad \text {for almost all } \mathbf{x} \in D$.
    By continuity of $h_{l, i}$ (Assumption~\ref{continuity}), a function that is zero almost everywhere must in fact be zero everywhere: if $h_{l, i}(\mathbf{x}_0)\neq 0$ at some point $\mathbf{x}_0\in D$, continuity would imply the existence of a neighborhood on which $h_{l, i}$ remains nonzero, contradicting the almost-everywhere zero property.
    Therefore, $h_{l,i}(\mathbf{x})=0$ for all $\mathbf{x}\in D$.
\end{proof}

\begin{lemma}[\textbf{Forward-to-Backward Dormancy Lemma}]
\label{forward_to_backward_lemma}
If $s_{l,i}=0$, then $\nabla h_{l,i}(\mathbf{x}) = 0$ for all $\mathbf{x} \in D$.
\end{lemma}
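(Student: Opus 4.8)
The plan is to deduce the claim almost immediately from \cref{ForwardDormancyLemma} once the smoothness hypothesis is brought in. First I would note that, by \cref{non_degeneracy}, the denominator of the dormancy index $s_{l,i}$ is bounded below by $m>0$, so $s_{l,i}=0$ is equivalent to $\mathbb{E}_{\mathbf{x}\in D}|h_{l,i}(\mathbf{x})|=0$; feeding this into \cref{ForwardDormancyLemma} yields $h_{l,i}(\mathbf{x})=0$ for all $\mathbf{x}$ in (the support of) $D$. Equivalently, the pre-activation $z_{l,i}(\mathbf{x})=\mathbf{w}_{l,i}^{T}\mathbf{x}+b_{l,i}$ never leaves the zero set of $\sigma_l$ along the data, so the neuron contributes nothing to the forward pass.

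Next I would translate ``zero value'' into ``zero backward signal.'' Reading $\Delta h_{l,i}(\mathbf{x})$ as the Jacobian of the neuron with respect to its layer input, $\Delta h_{l,i}(\mathbf{x})=\nabla_{\mathbf{x}}h_{l,i}(\mathbf{x})=\sigma_l'(z_{l,i}(\mathbf{x}))\,\mathbf{w}_{l,i}$ --- the exact factor by which this neuron rescales any gradient routed through it during backpropagation --- I invoke \cref{continuity}, which gives $h_{l,i}\in C^{1}(\mathbb{R}^{k_l})$. On the interior of $D$ I may then differentiate the identity $h_{l,i}\equiv 0$: for any such $\mathbf{x}$ and any direction $\mathbf{v}$, $\partial_{\mathbf{v}}h_{l,i}(\mathbf{x})=\lim_{t\to 0}t^{-1}\big(h_{l,i}(\mathbf{x}+t\mathbf{v})-h_{l,i}(\mathbf{x})\big)=0$, so $\nabla_{\mathbf{x}}h_{l,i}(\mathbf{x})=0$ there; since $\mathbf{x}\mapsto\nabla_{\mathbf{x}}h_{l,i}(\mathbf{x})$ is continuous (again $C^{1}$), this extends to the closure, giving $\Delta h_{l,i}(\mathbf{x})=0$ for every $\mathbf{x}\in D$.

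The part I expect to need the most care is definitional rather than analytic: pinning down ``$\mathbf{x}\in D$'' and ``$\Delta h_{l,i}$'' so that the differentiation step is actually licensed. \cref{ForwardDormancyLemma} only strictly delivers $h_{l,i}=0$ almost everywhere with respect to the data distribution, so I would use \cref{continuity} up front to upgrade this to ``everywhere zero on the support,'' and I would make explicit the (mild) reading of $D$ as a full-dimensional region so that its interior is nonempty and directional derivatives make sense. The argument is, however, robust to the alternative reading in which $\Delta h_{l,i}$ is a weight gradient $\partial\mathcal{L}/\partial\mathbf{w}_{l,i}\propto(\partial\mathcal{L}/\partial h_{l,i})\,\sigma_l'(z_{l,i})\,\mathbf{x}$: the same vanishing-directional-derivative computation forces $\sigma_l'(z_{l,i}(\mathbf{x}))=0$ wherever $\mathbf{w}_{l,i}\neq 0$, which kills that gradient too. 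I would therefore open the proof by fixing which object $\Delta h_{l,i}$ denotes and then run the two-line argument above.
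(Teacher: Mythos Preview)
Your proposal is correct and follows essentially the same approach as the paper: invoke \cref{ForwardDormancyLemma} to obtain $h_{l,i}\equiv 0$ on $D$, then use the $C^{1}$ hypothesis to conclude the gradient vanishes. The paper executes the second step via the Mean Value Theorem plus a contradiction argument rather than your direct directional-derivative computation, but the underlying idea is identical, and your explicit attention to interior-versus-boundary issues is if anything more careful than the paper's treatment.
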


\begin{proof}
By Lemma~\ref{ForwardDormancyLemma}, the condition $s_{l,i}=0$ implies that $h_{l,i}(\mathbf{x}) = 0 \quad \text{for all } \mathbf{x}\in D$. Fix any $\mathbf{x}\in D$ and consider an arbitrary perturbation $\delta_{\mathbf{x}}$ such that the entire segment $\{\mathbf{x}+t\delta_{\mathbf{x}} : t\in[0,1]\} \subseteq D$. By the Mean Value Theorem for vector-valued functions \cite{luenberger1984linear}, there exists $t\in[0,1]$ satisfying $h_{l,i}(\mathbf{x}+\delta_{\mathbf{x}}) - h_{l,i}(\mathbf{x})
= \nabla h_{l,i}(\mathbf{x}+t\delta_{\mathbf{x}})^{T}\delta_{\mathbf{x}}$. Both terms on the left vanish because $h_{l, i}$ is identically zero on $D$. Thus, $\nabla h_{l,i}(\mathbf{x}+t\delta_{\mathbf{x}})^{T}\delta_{\mathbf{x}} = 0 \quad \text{for all admissible }\delta_{\mathbf{x}}$.

Suppose, for contradiction, that there exists a point $\mathbf{y}\in D$ such that 
$\nabla h_{l,i}(\mathbf{y})\neq 0$.  
Choose $\delta_{\mathbf{x}}=\epsilon \nabla h_{l,i}(\mathbf{y})$ with $\epsilon>0$ sufficiently small so that $\mathbf{y}+t\epsilon \nabla h_{l,i}(\mathbf{y}) \in D$ for all $t\in[0,1]$.  
Then, $\nabla h_{l,i}(\mathbf{y}+t\epsilon \nabla h_{l,i}(\mathbf{y}))^{T}
\bigl(\epsilon\nabla h_{l,i}(\mathbf{y})\bigr)=0$.
As $\epsilon\to 0$, Assumption~\ref{continuity} ensures $\nabla h_{l,i}(\mathbf{y}+t\epsilon\nabla h_{l,i}(\mathbf{y}))
\longrightarrow \nabla h_{l,i}(\mathbf{y})$, which is nonzero by assumption. Hence the inner product above becomes $\epsilon \|\nabla h_{l,i}(\mathbf{y})\|^{2} \neq 0 \quad \text{for sufficiently small }\epsilon>0$, contradicting the fact that 
$\nabla h_{l,i}(\mathbf{x}+t\delta_{\mathbf{x}})^{T}\delta_{\mathbf{x}} = 0$. Therefore, no such point $\mathbf{y}$ can exist, and we conclude $\nabla h_{l,i}(\mathbf{x}) = 0 \quad \text{for all } \mathbf{x}\in D$.
\end{proof}

\begin{lemma}[\textbf{Backward Dormancy Lemma}]
\label{backward_dormancy_lemma}
Suppose the gradient $\nabla h_{l,i}(\mathbf{x}) = 0$ for all $\mathbf{x} \in \mathcal{X}_l$. Then $h_{l,i}(\mathbf{x})$ is constant on $\mathcal{X}_l$. Moreover, if $h_{l,i}(\mathbf{x}_0)=0$ for some $\mathbf{x}_0 \in \mathcal{X}_l$, then $h_{l,i}(\mathbf{x}) = 0$ for all $\mathbf{x}\in \mathcal{X}_l$, and thus the neuron is dormant $(s_{l,i}=0)$.
\end{lemma}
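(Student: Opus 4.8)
The plan is to run a standard three-stage argument: (i) a vanishing gradient forces $h_{l,i}$ to be constant on $\mathcal{X}_l$; (ii) pinning its value to $0$ at a single point makes that constant equal to $0$; (iii) a neuron that is identically $0$ on $D$ has dormancy index $0$ by \cref{non_degeneracy}.

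For (i), I would use \cref{continuity}, which gives $h_{l,i}\in C^{1}(\mathbb{R}^{k_l})$, so the hypothesis $\nabla h_{l,i}(\mathbf{x})=\mathbf{0}$ on $\mathcal{X}_l$ can be integrated. Assuming $\mathcal{X}_l$ is convex (the natural assumption for a layer's input set; path-connectedness would suffice), for any $\mathbf{x},\mathbf{y}\in\mathcal{X}_l$ set $\gamma(t)=(1-t)\mathbf{x}+t\mathbf{y}$, which lies in $\mathcal{X}_l$, and $\phi(t)=h_{l,i}(\gamma(t))$; the chain rule gives $\phi'(t)=\nabla h_{l,i}(\gamma(t))^{T}(\mathbf{y}-\mathbf{x})=0$, so by the fundamental theorem of calculus $h_{l,i}(\mathbf{y})-h_{l,i}(\mathbf{x})=\int_{0}^{1}\phi'(t)\,dt=0$. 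Hence $h_{l,i}\equiv c$ on $\mathcal{X}_l$ for some constant $c$. For (ii), if $h_{l,i}(\mathbf{x}_0)=0$ then $c=0$, so $h_{l,i}(\mathbf{x})=0$ for all $\mathbf{x}\in\mathcal{X}_l$. For (iii), since $D\subseteq\mathcal{X}_l$ this yields $\mathbb{E}_{\mathbf{x}\in D}|h_{l,i}(\mathbf{x})|=0$, so the numerator of $s_{l,i}$ vanishes while, by \cref{non_degeneracy}, the denominator is at least $m>0$; therefore $s_{l,i}=0$ and the neuron is dormant.

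The main obstacle is topological rather than computational: the implication ``gradient $\equiv 0 \Rightarrow$ constant'' genuinely requires $\mathcal{X}_l$ to be connected — on a disconnected input set $h_{l,i}$ could take a different constant on each component, and the zero-at-a-point hypothesis would only force the constant on the component containing $\mathbf{x}_0$ to vanish. I would therefore either add connectedness (or convexity) of $\mathcal{X}_l$ as a standing assumption, or restrict the conclusion to the connected component of $\mathbf{x}_0$, which still suffices for the dormancy conclusion provided $D$ lies in that component. Everything else reduces to the fundamental theorem of calculus together with \cref{continuity} and \cref{non_degeneracy}, so the remaining work is routine.
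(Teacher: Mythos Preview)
Your proposal is correct and follows essentially the same approach as the paper: integrate the zero gradient along a path to show $h_{l,i}$ is constant, pin the constant to $0$ via $\mathbf{x}_0$, and then use $D\subseteq\mathcal{X}_l$ together with non-degeneracy to conclude $s_{l,i}=0$. Your treatment is in fact slightly more careful than the paper's, which invokes ``a connected domain'' in passing without stating connectedness of $\mathcal{X}_l$ as an explicit hypothesis; your remark about restricting to the connected component of $\mathbf{x}_0$ is a reasonable way to handle that gap.
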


\begin{proof}
Since $\nabla h_{l,i}(\mathbf{x}) = 0$ for all $\mathbf{x}\in \mathcal{X}_l$, the function cannot vary with respect to its input. By standard multivariable calculus, a continuously differentiable function with vanishing gradient on a connected domain must be constant. More explicitly, for any $\mathbf{x}_1, \mathbf{x}_2 \in \mathcal{X}_l$, $h_{l,i}(\mathbf{x}_2) - h_{l,i}(\mathbf{x}_1)
= \int_{\mathbf{x}_1}^{\mathbf{x}_2} \nabla h_{l,i}(\mathbf{u})\, d\mathbf{u}
= 0$, which implies $h_{l,i}(\mathbf{x}_2) = h_{l,i}(\mathbf{x}_1)$. Therefore, $h_{l,i}$ is constant on $\mathcal{X}_l$. Denote this constant by \(c\), i.e., $h_{l,i}(\mathbf{x}) = c \quad \forall \mathbf{x}\in \mathcal{X}_l$.

If there exists $\mathbf{x}_0\in \mathcal{X}_l$ such that $h_{l,i}(\mathbf{x}_0)=0$, then the constant must satisfy $c=0$, implying $h_{l,i}(\mathbf{x}) = 0 \quad \forall \mathbf{x}\in \mathcal{X}_l$.
In particular, this holds for all $\mathbf{x}\in D \subseteq \mathcal{X}_l$. Hence,
\[
\mathbb{E}_{\mathbf{x}\in D}|h_{l,i}(\mathbf{x})| \! = \! 0 \! \Longrightarrow \!
s_{l,i} \! = \! \frac{0}{
\frac{1}{H_l} \sum_{k=1}^{H_l} 
\mathbb{E}_{\mathbf{x}\in D}|h_{l,k}(\mathbf{x})|
} \!= \! 0.
\]
Thus, the neuron is fully dormant.

\textbf{Remark.} If $c \neq 0$, then $\mathbb{E}_{\mathbf{x} \in D}\left|h_{l, i}(\mathbf{x})\right|$ would be $|c| \neq 0$, so $s_{l, i}$ could potentially be nonzero, indicating the neuron is not dormant. Therefore, the key condition that $h_{l,i}(\mathbf{x})=0$ at some point $\mathbf{x}_{0}$ in $\mathbf{X}_{l}$ (thus forcing $c=0$) is crucial to concluding $s_{l,i}=0$. In many network architectures, one can ensure $h_{l, i}(0)=0$ by design (e.g., bias initialized to zero and activation is ReLU), or rely on data/architectural constraints that force a zero constant rather than a nonzero one.
\end{proof}

These lemmas collectively unveil the intrinsic properties of dormant neurons from complementary perspectives. Building upon these foundational results, we present a comprehensive characterization of bidirectional dormancy in Theorem \ref{BidirectionalDormancyCharacterization}, which provides a unified framework for analyzing neural network plasticity.

\textit{Theorem} \ref{BidirectionalDormancyCharacterization} \textit{\textbf{Bidirectional Dormancy Characterization}}: Let $D \subseteq \mathcal{X}_l \subseteq \mathbb{R}^{k_l}$ be the domain from which inputs $\mathbf{x}$ are drawn. Consider layer $l$ with $H_l$ neurons, and let the activation of the $i$-th neuron be $h_{l,i}(\mathbf{x}) = \sigma_l(\mathbf{w}_{l,i}^\top \mathbf{x} + b_{l,i})$, where $\sigma_l$ is continuously differentiable and applied elementwise. The neuron’s \emph{dormancy index} is defined by
\[
s_{l,i}
=
\frac{
\mathbb{E}_{\mathbf{x}\in D}\bigl|h_{l,i}(\mathbf{x})\bigr|
}{
\frac{1}{H_l}\sum_{j=1}^{H_l} \mathbb{E}_{\mathbf{x}\in D}\bigl|h_{l,j}(\mathbf{x})\bigr|
}.
\]
Under Assumptions~\ref{continuity}–\ref{non_degeneracy}, the following statements are equivalent:
\begin{itemize}
    \item[(A)] \textbf{Dormancy:} $s_{l,i}=0$, equivalently $\mathbb{E}_{\mathbf{x}\in D}|h_{l,i}(\mathbf{x})|=0$.
    \item[(B)] \textbf{Zero gradient on $D$ and one zero activation:}
    \[
    \nabla h_{l,i}(\mathbf{x}) = 0 \quad \forall\, \mathbf{x}\in D,
    \qquad
    \exists\, \mathbf{x}_0 \in D : h_{l,i}(\mathbf{x}_0)=0.
    \]
\end{itemize}

\begin{proof}
\textbf{(A $\Rightarrow$ B).}
If $s_{l,i}=0$, Lemma~\ref{ForwardDormancyLemma} implies that $h_{l,i}(\mathbf{x}) = 0 \quad \forall\, \mathbf{x}\in D$. Applying Lemma~\ref{forward_to_backward_lemma} then yields $\nabla h_{l,i}(\mathbf{x}) = 0 \quad \forall\, \mathbf{x}\in D$. Since the neuron outputs zero everywhere on $D$, any point $\mathbf{x}_0\in D$ satisfies $h_{l,i}(\mathbf{x}_0)=0$, establishing (B).

\medskip
\textbf{(B $\Rightarrow$ A).}
Assume (B) holds. Because $\nabla h_{l,i}(\mathbf{x})=0$ for all $\mathbf{x}\in D$, the function $h_{l,i}$ must be constant on $D$. Let this constant be $c$. Since $h_{l,i}(\mathbf{x}_0)=0$ for some $\mathbf{x}_0\in D$, we have $c=0$, hence $h_{l,i}(\mathbf{x})=0 \quad \forall\, \mathbf{x}\in D$. Applying Lemma~\ref{ForwardDormancyLemma} in reverse, the identity \(h_{l,i}\equiv 0\) on $D$ implies $s_{l,i}=0$. Thus (A) follows.
\end{proof}

\subsection{Silent Neuron Characterization Theorem Proof}
\label{SilentNeuronCharacterizationTheoremProof}

\textit{Theorem} \ref{SilentNeuronCharacterization} \textit{\textbf{(Silent Neuron Characterization)}}. 
Let $D \subseteq \mathcal{X}_l \subseteq \mathbb{R}^{k_l}$ be the domain from which inputs $\mathbf{x}$ are drawn. Consider a layer $l$ with $H_l$ neurons, and let the activation of the $i$-th neuron be $h_{l,i}(\mathbf{x}) = \sigma_l(\mathbf{w}_{l,i}^\top \mathbf{x} + b_{l,i})$, where $\sigma_l$ is continuously differentiable and applied elementwise. Define the neuron's \emph{activity index} as
\[
\xi_{l,i}
=
\frac{
\mathbb{E}_{\mathbf{x}\in D}|h_{l,i}(\mathbf{x})| + \mathbb{E}_{\mathbf{x}\in D}|g_{l,i}(\mathbf{x})|
}{
\frac{1}{H_l}\sum_{j=1}^{H_l}\mathbb{E}_{\mathbf{x}\in D}|h_{l,j}(\mathbf{x})|
},
\]
where $g_{l, i}(\mathbf{x}) = \frac{\partial}{\partial h_{l, i}} \sum_{i} f_{\mathbf{w}}(\mathbf{x}_i)$ denotes the neuron's gradient with respect to aggregated network outputs.  
Under Assumptions~\ref{continuity}–\ref{non_degeneracy}, where $m>0$ denotes the non-degeneracy lower bound from Assumption 3, and suppose there exist constants $M_h > 0$. Then, the following statements are equivalent:
\begin{itemize}
    \item[(A)] \textbf{Silence:} For any $\epsilon > 0$, the activity index satisfies $\xi_{l,i} < \epsilon$.
    \item[(B)] \textbf{Vanishing forward and backward activity (on $D$):} For any $\delta > 0$, we have
    \[
    \mathbb{E}_{\mathbf{x}\in D}|h_{l,i}(\mathbf{x})| < \delta
    \quad \text{and} \quad
    \mathbb{E}_{\mathbf{x}\in D}|g_{l,i}(\mathbf{x})| < \delta.
    \]
\end{itemize}

\begin{proof}
\textbf{(A $\Rightarrow$ B).}  
Suppose $\xi_{l,i} < \epsilon$. Then
\begin{align*}
\epsilon 
&> \xi_{l,i} \\
&=
\frac{
\mathbb{E}_{\mathbf{x}\in D}|h_{l,i}(\mathbf{x})|
+
\mathbb{E}_{\mathbf{x}\in D}|g_{l,i}(\mathbf{x})|
}{
\frac{1}{H_l}\sum_{j=1}^{H_l}\mathbb{E}_{\mathbf{x}\in D}|h_{l,j}(\mathbf{x})|
} \\
&\ge
\frac{
\mathbb{E}_{\mathbf{x}\in D}|h_{l,i}(\mathbf{x})|
+
\mathbb{E}_{\mathbf{x}\in D}|g_{l,i}(\mathbf{x})|
}{
M_h
}.
\end{align*}
where the inequality follows from boundedness of the denominator.  
Thus, $\mathbb{E}_{\mathbf{x}\in D}|h_{l,i}(\mathbf{x})| + \mathbb{E}_{\mathbf{x}\in D}|g_{l,i}(\mathbf{x})| < M_h \epsilon$. 
Since (A) holds for any $\epsilon > 0$. For an arbitrary $\delta > 0$, let us consider $\epsilon = \frac{\delta}{M_{h}}$. $\mathbb{E}_{\mathbf{x}\in D}|h_{l,i}(\mathbf{x})|$ and $\mathbb{E}_{\mathbf{x}\in D}|g_{l,i}(\mathbf{x})|$ are non-negative:
\[
\mathbb{E}_{\mathbf{x}\in D}|h_{l,i}(\mathbf{x})| < M_h\epsilon =\delta,
\qquad
\mathbb{E}_{\mathbf{x}\in D}|g_{l,i}(\mathbf{x})| < M_h\epsilon = \delta,
\]
which implies (B) for arbitrarily small $\delta$.

\medskip
\textbf{(B $\Rightarrow$ A).}  
Assume (B) holds. We can choose a specific $\delta = \frac{m \epsilon}{2}>0$. Then
\[
\xi_{l,i}
=
\frac{
\mathbb{E}_{\mathbf{x}\in D}|h_{l,i}(\mathbf{x})|
+
\mathbb{E}_{\mathbf{x}\in D}|g_{l,i}(\mathbf{x})|
}{
\frac{1}{H_l}\sum_{j=1}^{H_l}\mathbb{E}_{\mathbf{x}\in D}|h_{l,j}(\mathbf{x})|
}
<
\frac{
2 \cdot \frac{m \epsilon}{2}
}{
m
}
= \epsilon,
\]
This proves $\xi_{l,i} < \epsilon$, establishing (A).
\end{proof}

\subsection{The Silent Neuron–Enhanced PPO Performance Bound}
\label{TheSilentNeuronEnhancedPPOPerformanceBound}

Before deriving the Silent Neuron–Enhanced PPO performance bound, we first establish a preliminary lemma to streamline the subsequent proof based on the PA-MoE theory.

\begin{lemma}[Noise energy in the silent subspace]
\label{lemma:silent_noise_energy}
Let $E_t = \eta\gamma \Pi_t \boldsymbol{\epsilon}_t$ with $\boldsymbol{\epsilon}_t \sim \mathcal{N}(0, I_d)$ and $\Pi_t$ an orthogonal projection of rank $d_{s,t}$. Then $\mathbb{E}\,\|E_t\|_2^2 = \eta^2 \gamma^2 \operatorname{tr}(\Pi_t) = \eta^2 \gamma^2 d_{s,t} \le \eta^2 \gamma^2 d_s$.
\end{lemma}

\begin{proof}
Since $\boldsymbol{\epsilon}_t \sim \mathcal{N}(0, I_d)$, we have $\mathbb{E}\,\|E_t\|_2^2
\! = \! \eta^2\gamma^2 \mathbb{E}\bigl[\! \boldsymbol{\epsilon}_t^\top \! \Pi_t^\top \! \Pi_t \boldsymbol{\epsilon}_t \!\bigr] \! = \! \eta^2\gamma^2 \mathbb{E}\bigl[\! \boldsymbol{\epsilon}_t^\top \Pi_t \boldsymbol{\epsilon}_t \!\bigr] \! = \! \eta^2 \gamma^2 \! \operatorname{tr}(\Pi_t)$, where we used $\Pi_t^2=\Pi_t$ and the standard identity
$\mathbb{E}[\boldsymbol{\epsilon}^\top A \boldsymbol{\epsilon}] = \operatorname{tr}(A)$ for $\boldsymbol{\epsilon}\sim\mathcal{N}(0,I_d)$. 
Since $\operatorname{tr}(\Pi_t) = d_{s,t}$ and $d_{s,t}\le d_s$ by definition, the result follows.
\end{proof}

Lemma~\ref{lemma:silent_noise_energy} shows that the effective noise energy depends only on the dimension of the silent neuron subspace, rather than the full parameter dimension $d$. This allows us to inject relatively strong perturbations into under-utilized neurons without destabilizing the overall optimization dynamics.

$L_t(\mathbf{w})$, with $\mathbf{w} \in \mathbb{R}^d$, denote the ppo objectiveat time $t$. Within this restricted proximity, the global objective—which is generally non-convex over the full parameter space—admits a local second-order approximation. Building on the analytical frameworks of \cite{agarwal2021theory,schulman2015trust}, we adopt standard regularity assumptions on the local behavior of the objective, namely L-smoothness and (local) $\mu$-strong convexity within the trust region. Notably, the Critic is trained with a mean squared error loss, which is a convex objective. While these assumptions do not hold globally for neural-network-parameterized policies, they are commonly invoked in the analysis of trust-region methods to characterize convergence rates and parameter adaptation dynamics under non-stationary conditions. We note that this analysis is only within the trust region; it provides no guarantee once updates move outside this region. Following the theoretical assumptions in PA-MoE~\cite{he2025plasticity}, we adopt:

\textbf{(A4) L-Smoothness.} There exists $L>0$ such that, for all $\mathbf{w},\mathbf{w}'\in\mathbb{R}^d$, $\|\nabla L_t(\mathbf{w})-\nabla L_t(\mathbf{w}')\| \le L\|\mathbf{w}-\mathbf{w}'\|$.

\textbf{(A5) $\mu$-Strong Convexity.} There exists $\mu>0$ such that, for all $\mathbf{w},\mathbf{w}'\in\mathbb{R}^d$, $L_t(\mathbf{w}') \ge L_t(\mathbf{w}) + \nabla L_t(\mathbf{w})^{\top}(\mathbf{w}'-\mathbf{w}) + \frac{\mu}{2}\|\mathbf{w}'-\mathbf{w}\|^2$.

\textbf{(A6) Nonstationarity.} Let the time-varying optimum be $\mathbf{w}_t^* \in \arg\min_{\mathbf{w}} L_t(\mathbf{w})$, and define the path length $P_T := \sum_{t=1}^{T-1}\|\mathbf{w}_{t+1}^*-\mathbf{w}_t^*\|$.
We assume $P_T < \infty$.

\textbf{Theorem \ref{thm:resin-error-bound}:}  (\textbf{Tracking Analysis of Subspace-Restricted Updates for PPO}) 
Under Assumptions (A4)--(A6) with step-size $0 < \eta \leq 1/L$, and with the update rule, $\boldsymbol{w}_{t+1} = \boldsymbol{w}_t - \eta\, \nabla L_t\bigl(\boldsymbol{w}_t\bigr) + \eta\, \gamma\, \Pi_{t}\epsilon_t, \quad \epsilon_t \sim \mathcal{N}\left(0, I_d\right)$, the average squared error satisfies, $\frac{1}{T} \sum_{t=1}^{T} \mathbb{E}\|e_t\|^2 \le \frac{2}{\mu \eta T} \left(\mathbb{E}||e_{0}||^{2} + \frac{2 P_{T}^{2}}{\mu \eta} \right) + \frac{2 \eta \gamma^{2}}{\mu} \cdot \frac{1}{T}\sum_{t=0}^{T-1}d_{s,t}$, where the error is defined as $e_t = \boldsymbol{w}_t - \boldsymbol{w}_t^*$, the path length of the optimal parameters is $P_T = \sum_{t=1}^{T-1}\|\boldsymbol{w}_{t+1}^* - \boldsymbol{w}_t^*\|$.
\begin{proof}
Let the movement of the optimal parameters be represented by $\Delta_t := \boldsymbol{w}_{t+1}^* - \boldsymbol{w}_t^*$. The tracking error evolves as
\begin{align}
& e_{t+1} = \boldsymbol{w}_{t+1} - \boldsymbol{w}_{t+1}^* \nonumber\\[1mm]
&= \boldsymbol{w}_t - \eta\, \nabla L_t\bigl(\boldsymbol{w}_t\bigr) + \eta\, \gamma\, \Pi_{t} \epsilon_t - \boldsymbol{w}_{t+1}^* \nonumber\\[1mm]
&= \underbrace{\left[\boldsymbol{w}_t - \boldsymbol{w}_{t}^* - \eta\Bigl(\nabla L_t\bigl(\boldsymbol{w}_t\bigr) - \nabla L_t\bigl(\boldsymbol{w}_t^*\bigr)\Bigr)\right]}_{=: A_t} - \Delta_t + \underbrace{\eta\, \gamma\, \Pi_{t} \epsilon_t}_{=: E_t} \nonumber \\
& =\underbrace{\left(e_t-\eta\left(\nabla L_t\left(\boldsymbol{w}_i^t\right)-\nabla L_t\left(\boldsymbol{w}_t^*\right)\right)\right)}_{\text {Contraction Term } A_t}-\underbrace{\Delta_t}_{\text {Drift Term }}+\underbrace{\eta \gamma \Pi_{t} \epsilon_t}_{\text {Noise Term }}. \nonumber
\end{align}
Thus, $e_{t+1} = A_t - \Delta_t + E_t$. From PA-MoE \textbf{Contraction Property of \(A_t\)}, it holds that $\|A_t\|^2 \le (1 - \mu \eta)\,\|e_t\|^2$ with assumption $\eta \leq 1/L$.

\textbf{Recursive bound with noise and drift.}  
Expanding the square and taking expectation ($\mathbb{E} \left[ \epsilon_{t} \right] = 0$)

\begin{align}
    \left\|e_{t+1}\right\|^2 & \leq \nonumber \\ 
    & \left\|A_t-\Delta_t\right\|^2+2 \eta \gamma \Pi_{t} \left\langle A_t-\Delta_t, \epsilon_t\right\rangle+\eta^2 \gamma^2d_{s,t}\left\|\epsilon_t\right\|^2. \nonumber
\end{align}
Using the parameterized inequality \cite{he2025plasticity} $\|a+b\|^2 \! \leq \! (1 \! + \!\alpha)\|a\|^2 \! + \!\left(1 \!+\! \alpha^{-1}\right)\|b\|^2, a > 0, \nonumber$ gives $\mathbb{E}\left\|e_{t+1}\right\|^2 \! \leq \! (1+\alpha)(1-\mu \eta) \mathbb{E}\left\|e_t\right\|^2+\left(1 \! + \! \alpha^{-1}\right)\left\|\Delta_t\right\|^2 \! + \! \eta^2 \gamma^2 d_{s,t}$. 
Choose $\alpha=\mu \eta / 2$ ensuring $(1 + \alpha)(1 - \mu \eta) \leq 1 - \mu \eta / 2$. This yield:
\begin{equation}
    \mathbb{E}\left\|e_{t+1}\right\|^2 \leq\left(1-\frac{\mu \eta}{2}\right) \mathbb{E}\left\|e_t\right\|^2+\frac{2}{\mu \eta}\left\|\Delta_t\right\|^2+\eta^2 \gamma^2 d_{s,t}. \nonumber
\end{equation}

\textbf{Telescoping Sum and Final Bound.} 
Iterating the inequality gives 
\begin{align}
    \mathbb{E}\left\|e_t\right\|^2 \! \leq\left(1-\frac{\mu \eta}{2}\right)^t \mathbb{E}&\left\|e_0\right\|^2 \! +\! \frac{2}{\mu \eta} \! \sum_{k=0}^{t-1}\left(1-\frac{\mu \eta}{2}\right)^{t-1-k}\left\|\Delta_k\right\|^2 \nonumber \\
    &+ \eta^2 \gamma^2 \sum_{k=0}^{t-1}\left(1-\frac{\mu \eta}{2}\right)^{t-1-k} \cdot d_{s,k}. \nonumber
\end{align}
Summing and averaging the error from $t=1$ to $T$. The initial factor decays geometrically:
\begin{equation}
    \frac{1}{T} \sum_{t=1}^T\left(1-\frac{\mu \eta}{2}\right)^t \mathbb{E}\left\|e_0\right\|^2 \leq \frac{1}{T} \cdot \frac{2}{\mu \eta} \mathbb{E}\left\|e_0\right\|^2. \nonumber
\end{equation}

For the drift term, interchange summation order and utilize the geometric series formula:
\begin{align}
    \frac{1}{T} \! \sum_{t=1}^T \! \frac{2}{\mu \eta} \! \sum_{k=0}^{t-1} \! \left(1 \! - \! \frac{\mu \eta}{2} \! \right)^{t \! - \! 1 \! - \!k} \!\left\|\!\Delta_k\!\right\|^2 \! \leq \! \frac{1}{T}\! \cdot\! \frac{4}{\mu^2 \eta^2} \! \sum_{k=0}^{T-1} \! \left\|\!\Delta_k\!\right\|^2. \nonumber
\end{align}
The noise contribution at each time step follows a geometric series: 
$$\frac{1}{T} \sum_{t=1}^{T} \eta^{2} \gamma^{2} \sum_{k=0}^{t-1} (1 -\frac{\mu \eta}{2})^{t-1-k} d_{s,k} \leq \frac{2 \eta \gamma^{2}}{\mu} \cdot \frac{1}{T}\sum_{k=0}^{T-1}d_{s,k}.$$
After a refined constant adjustment, we arrive at
\begin{align}
    \frac{1}{T} \sum_{t=1}^{T} \mathbb{E}\|e_t\|^2 & \leq \frac{1}{T}\cdot \frac{2}{\mu\eta} \mathbb{E}||e_{0}||^{2} + \frac{4P_{T}^{2}}{ \mu^{2} \eta^{2} T} + \frac{2 \eta \gamma^{2}}{\mu} \cdot \frac{1}{T} \sum_{t=0}^{T-1}d_{s,t} \nonumber \\
    & = \frac{2}{\mu \eta T}\left( \mathbb{E}||e_{0}||^{2} + \frac{2 P_{T}^{2}}{\mu \eta}\right) + \frac{2 \eta \gamma^{2}}{\mu} \cdot \frac{1}{T} \sum_{t=0}^{T-1}d_{s,t} \nonumber \\
    & \leq \frac{2}{\mu \eta T}\left( \mathbb{E}||e_{0}||^{2} + \frac{2 P_{T}^{2}}{\mu \eta}\right) + \frac{2 \eta \gamma^{2}}{\mu} \cdot d. \nonumber
\end{align}
The expression $\frac{2}{\mu \eta T}\left( \mathbb{E}\|e_{0}\|^{2} + \frac{2 P_{T}^{2}}{\mu \eta}\right) + \frac{2 \eta \gamma^{2}}{\mu} \cdot d$ represents the bound of PA-MoE \cite{he2025plasticity}.
\end{proof}

\textbf{Remark (Strict Improvement):} ReSiN's bound is strictly tighter than PA-MoE's whenever $\frac{1}{T} \sum_{t=0}^{T-1} d_{s,t} < d$. Since $d_{s, t} \leq d$ holds at every step by construction (the silent subspace is a subspace of the full parameter space), a sufficient condition for strict inequality is the existence of at least one step  $t^{*} \in \{0, \cdots, T-1\}$ with $d_{s, t^{*}} < d$. In this case, $\frac{1}{T}\sum_{t=0}^{T-1} d_{s,t} \leq \frac{(T-1)d + d_{s,t^*}}{T} < d$, and the strict inequality follows. Fig.~\ref{dormantandzerogradient} empirically confirms that this condition holds throughout ABR training under non-stationary conditions, as the silent subspace dimension $d_{s, k}$ is consistently smaller than $d$ at every observed training step.

\subsection{Empirical Justification of the $\mu$-Strong Convexity Assumption}
\label{EmpiricalJustificationAssumption}

We provide empirical support for assumption A5 ($\mu$-strong Convexity). Fig.~\ref{theory_prove_radius_radii} shows that our sampled perturbation points fall almost entirely within the PPO trust region across all tested radii, confirming that our subsequent curvature analysis is conducted within the relevant confidence interval.

\begin{figure}[!htbp]
\centering
\includegraphics[width=\columnwidth]{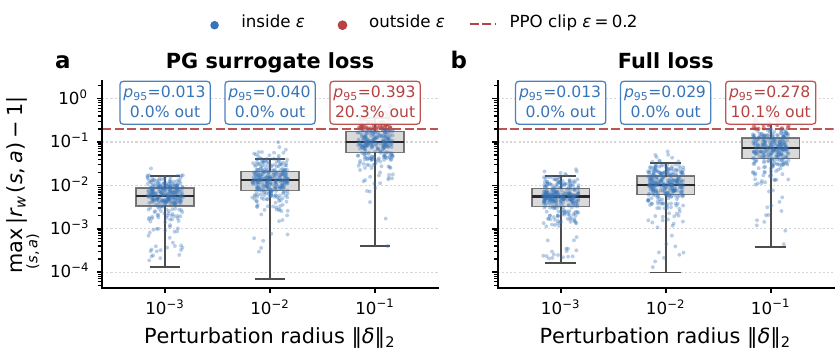}%
\caption{Distribution of perturbation points across different perturbation radii.}
\label{theory_prove_radius_radii}
\end{figure}

Building on this, Fig.~\ref{fig:hessian_spectrum} illustrates that the smallest eigenvalue of the Hessian, $\lambda_{min}(H)$, which represents the direction of most negative curvature, remains negative when considering only the policy gradient loss. This indicates that the objective function is non-convex. However, when the value loss is also taken into account, the largest eigenvalue, $\lambda_{max}(H)$, which captures the direction of steepest curvature, increases significantly. This renders the smallest eigenvalue negligible in comparison, demonstrating that the value loss dominates the curvature of the overall objective. This supports our assumption that the objective is approximately strongly convex within the trust region.

\begin{figure}[!htbp]
\centering
\includegraphics[width=\columnwidth]{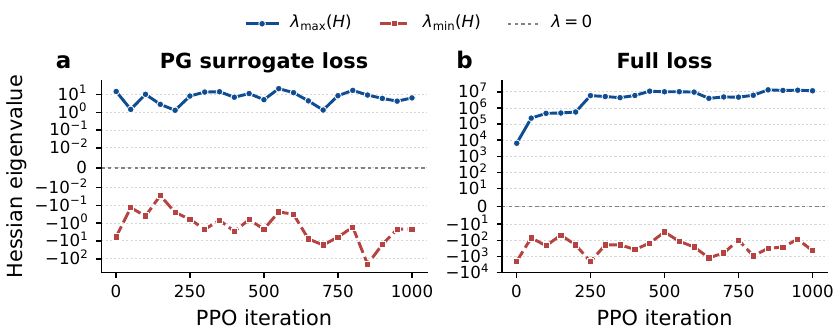}%
\caption{Hessian spectrum over PPO training.}
\label{fig:hessian_spectrum}
\end{figure}

We emphasize that this evidence offers support for, not proof of, the $\mu$-strong convexity assumption. Were strong convexity to hold exactly, it would not need to be assumed, but could instead be proven as a theorem. The residual, small negative values of $\lambda_{min}(H)$ observed in our experiments reflect exactly why this property is introduced as an assumption rather than a fact. We therefore treat (A5) as a reasonable approximation within the PPO trust region and leave a more rigorous characterization to future work.

\end{document}